\documentclass[twocolumn,10pt]{article}
\usepackage[T1]{fontenc}
\usepackage[utf8]{inputenc}
\usepackage{mathptmx}
\usepackage[scaled=.90]{helvet}
\usepackage{courier}

\usepackage[a4paper,top=0.7in,bottom=0.9in,left=0.62in,right=0.62in,includefoot,footskip=0.35in]{geometry}
\usepackage{graphicx}
\usepackage{xcolor}
\usepackage{tikz}
\usetikzlibrary{arrows.meta,positioning,shapes.geometric,calc}
\usepackage{cuted}
\usepackage{microtype}
\usepackage{parskip}
\usepackage{tcolorbox}
\tcbuselibrary{skins,breakable}
\usepackage{academicons}
\usepackage{fontawesome5}
\usepackage{hyperref}
\usepackage{lastpage}
\usepackage{booktabs}
\usepackage{colortbl}
\usepackage{tabularx}
\usepackage{caption}
\usepackage{enumitem}
\usepackage{fancyhdr}
\usepackage{titlesec}
\usepackage{float}
\usepackage{amsmath, amssymb}
\usepackage{mathtools}
\usepackage{multirow}
\usepackage{amsthm}
\usepackage[ruled,vlined,linesnumbered]{algorithm2e}
\newtheorem{definition}{Definition}
\newtheorem{proposition}{Proposition}
\newtheorem{assumption}{Assumption}
\newtheorem{remark}{Remark}

\usepackage[backend=biber,style=ieee]{biblatex}
\usepackage{stfloats}

\renewcommand{\arraystretch}{1.12}

\usepackage{wrapfig}
\graphicspath{{figs/}}

\definecolor{primary}{HTML}{003049}
\definecolor{accent}{HTML}{F77F00}
\definecolor{textgray}{HTML}{2F2F2F}
\newcommand{\rowsep}{\arrayrulecolor{black!20}\specialrule{0.2pt}{1.3pt}{1.3pt}\arrayrulecolor{black}}

\hypersetup{colorlinks=true, linkcolor=primary, citecolor=primary, urlcolor=primary}

\titleformat{\section}{\color{primary}\sffamily\large\bfseries}{\thesection}{0.8em}{}[\vspace{0.2em}\titlerule]
\titleformat{\subsection}{\color{primary}\sffamily\normalsize\bfseries}{\thesubsection}{0.7em}{}
\titlespacing*{\section}{0pt}{8pt}{4pt}
\titlespacing*{\subsection}{0pt}{5pt}{2pt}
\setlist[itemize]{left=1.2em, itemsep=2pt, topsep=2pt, parsep=0pt}
\setlist[enumerate]{left=1.4em, itemsep=2pt, topsep=2pt, parsep=0pt}

\tcbset{abstractstyle/.style={enhanced, colback=white, colframe=primary, boxrule=1pt,
  fonttitle=\bfseries\sffamily\large, left=6mm, right=6mm, top=2mm, bottom=2mm, width=\textwidth, boxsep=4pt, breakable}}
\makeatletter
\def\@maketitle{%
  \begin{center}
    {\fontsize{18pt}{21pt}\selectfont \bfseries \textcolor{primary}{Detecting Hallucinations in Retrieval-Augmented Generation\\[2pt]
      through Grounding-Aware Sensitivity by Perturbation (GASP)}}\\[1ex]
    {\normalsize
      Mohamed Aly Bouke\,%
      \raisebox{0.6ex}{\href{https://orcid.org/0000-0003-3264-601X}{\includegraphics[height=1.4ex]{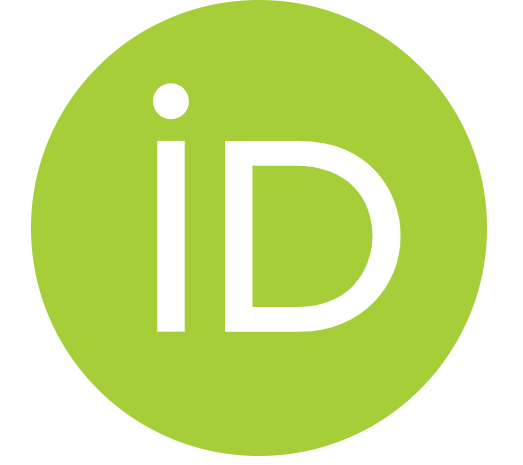}}}%
      \,\raisebox{0.6ex}{\href{mailto:bouke@ieee.org}{\textcolor{gray}{\scriptsize\faEnvelope}}}%
      \raisebox{1.3ex}{\scriptsize1,*}%
    }\\[0.8ex]
    {\footnotesize
      \textsuperscript{1}Centre for Intelligent Cloud Computing, CoE for Advanced Cloud, Faculty of Information Science and Technology,\par
      Multimedia University, Jalan Ayer Keroh Lama, Bukit Beruang, 75450, Melaka, Malaysia
    }\\[0.5ex]
    {\scriptsize \texttt{*bouke@ieee.org, alybouke@mmu.edu.my}}\\[1ex]
    {\scriptsize \textit{Research Article}, \today}
  \end{center}
}
\renewcommand{\maketitle}{\twocolumn[\color{textgray}\@maketitle\vspace{-1.0em}]}
\makeatother

\begin{document}
\maketitle

\begin{strip}
  \begin{center}
    \begin{tcolorbox}[abstractstyle, title=Abstract]
      \normalsize
      Retrieval-augmented generation (RAG) reduces but does not eliminate hallucination, and existing detectors return a
      single answer-level score that does not indicate which sentence is unsupported, or why. To close this gap, we
      introduce Grounding-Aware Sensitivity by Perturbation (GASP), a span-level detector that scores each answer sentence
      by how strongly its likelihood depends on the retrieved evidence, a quantity we term grounding sensitivity. GASP
      holds the answer fixed and re-scores it under the full context, under no context, and with each chunk removed, then
      measures the log-likelihood drops and Jensen-Shannon divergences (JSD). The likelihood of a grounded sentence
      collapses once its supporting passage is removed, whereas a hallucinated sentence is almost unaffected, a contrast
      we interpret by casting decoding as a random nonlinear iterated function system (RNIFS). We evaluate GASP on three
      benchmarks (RAGTruth, TofuEval, RAGBench) with three instruction-tuned scorers from two model families
      (Qwen2.5-0.5B, Qwen2.5-1.5B, and SmolLM2-1.7B) under a leakage-clean protocol. On RAGTruth it reaches a
      response-level area under the receiver operating characteristic (ROC) curve (AUC) of about $0.73$ and a span-level
      AUC of about $0.67$, improving significantly over perplexity and by clear margins over length, whole-context
      natural language inference (NLI), and self-consistency baselines. The only baseline competitive at the span level
      is a well-configured chunk-level entailment verifier, which requires a separate model, whereas a training-free
      threshold on the grounding features matches the trained classifier without labeled data and serves as the default
      detector. Beyond RAGTruth, the signal transfers to TofuEval but not to short-answer question answering in RAGBench,
      showing GASP is best suited to outputs constructed from the retrieved context rather than answers recoverable from
      parametric knowledge.
    \end{tcolorbox}
  \end{center}
  \vspace{3pt}
  {\noindent\small\textbf{Keywords:}\, hallucination detection, retrieval-augmented generation, context perturbation, explainable AI, large language models\par}
\end{strip}

\section{Introduction}\label{sec:intro}
Large language models (LLMs) are increasingly deployed with RAG, where a retriever supplies passages that the model
conditions on when answering a query~\cite{lewis2020rag}. Grounding generation in retrieved evidence narrows the gap
between what a model asserts and what can be verified, and it has become the dominant pattern for question answering,
summarization, and assistant systems in knowledge-intensive settings. Retrieval injects up-to-date and domain-specific
information that a fixed parametric model cannot hold, and it offers a route to attribution, since each claim can in
principle be traced to a retrieved passage. The pattern is now spreading into settings where the cost of an error is
high, such as clinical decision support, legal research, and financial analysis, which raises the stakes of any
unsupported statement the system produces and calls for checking grounding at the level of the individual claim.

Yet retrieval does not remove the core failure of generative models. They continue to produce fluent statements that
the supplied evidence does not support, a phenomenon usually called hallucination~\cite{ji2023survey,huang2025survey}.
The model remains free to blend its parametric prior with the retrieved text, to over-generalize from a partial match,
or to contradict a passage it has nominally read. A single unsupported sentence that reads as confidently as a true one
can cause real harm. Consider a clinical assistant that retrieves passages from a drug formulary and answers a dosing
question. If one sentence states
a dose that no retrieved passage supports, the answer as a whole may still read as authoritative, and a busy clinician
may not catch the single unsupported clause. The practical question is therefore not only whether an answer is
trustworthy as a whole, but which specific parts of it are unsupported, and why.

Two limitations constrain hallucination detectors in practice. The first is granularity and transparency. Most
detectors return one score for an entire answer, so a reviewer learns that the answer might be unreliable but not
which sentence is unsupported, nor why it was flagged. A response-level score is poorly matched to how people verify
text, since verification proceeds claim by claim. The second limitation concerns the signal itself. A natural first
attempt is to read intrinsic properties of the generated text, such as its perplexity or token-level uncertainty, and
to treat unusual values as evidence of hallucination. The difficulty is that an unsupported sentence can be as fluent
and internally ordinary as a true one, so intrinsic signals measure how the model writes, not whether what it writes
is supported.

The central observation of this work is that hallucination is a property of the relationship between a sentence and
its evidence, not of the sentence in isolation. A faithful sentence is constructed from a particular passage and
depends on it, so its plausibility to the model collapses if that passage is removed, whereas a hallucinated sentence
is produced from the model prior and is unchanged by the removal. We turn this into a measurable signal by perturbing
the retrieved context and observing how much each answer span reacts. We give the signal a dynamical-systems reading
by viewing decoding under a context as an RNIFS whose attractor is the set of grounded continuations, so that grounding
sensitivity is how strongly that attractor reacts to perturbations of the evidence that defines it. This reading connects
the detector to the theory of how an invariant measure responds to perturbation and clarifies why a measure-theoretic
distance, rather than a surface text statistic, is the right instrument.

A practical consequence of testing grounding by perturbation is that the detector is cheap and self-contained. It
needs only the ability to score a fixed answer under a few variants of the context, which a small model provides, so
it adds no separate verifier to maintain and no resampling of the answer. The one requirement is access to a
probabilistic scorer that can return token likelihoods under a given context. This scorer can be a different and
smaller model than the one that produced the answer, which lets the same guardrail audit a large or hosted generator
whose internals are unavailable, though it does not apply to a generator that exposes neither its own log-probabilities
nor a stand-in scorer.

The objective of this paper is to design and evaluate a span-level RAG hallucination detector that scores each answer
sentence by its grounding sensitivity and that returns, for each flag, the context unit that best supports the span as
an explanation. We require the detector to outperform perplexity, length, and whole-context entailment, and to be at
least competitive with a well-configured chunk-level entailment verifier, under a
leakage-clean evaluation that prevents sentences from the same answer from appearing in both training and test folds.
The intended use is an explainable guardrail and audit tool for RAG assistants, where a reviewer or an automated filter
sees, for each sentence, whether it is anchored in the retrieved evidence, sees the supporting passage when it is, and
receives an alert when a sentence is anchored in nothing.

The main contributions of this paper are as follows.
\begin{enumerate}
  \item We develop GASP, a span-level RAG hallucination detector that scores each sentence by its grounding sensitivity,
        the change in the sentence likelihood when the retrieved context is perturbed.
  \item We provide a training-free threshold-only variant that requires no labeled data.
  \item We provide an attribution mechanism that returns a candidate supporting passage for each sentence at no extra
        cost, and evaluate it against NLI and lexical controls.
  \item We evaluate the detector under a leakage-clean protocol across three benchmarks (RAGTruth, TofuEval, RAGBench)
        and three scorers from two families, with confidence intervals, significance tests, and ablations, and map where
        the signal does and does not transfer.
  \item As a supporting interpretation, we cast decoding as an RNIFS and bound the movement of its invariant measure by
        the divergence the detector reads, testing this on the hidden states.
\end{enumerate}

The remainder of this paper is organized as follows. Section~\ref{sec:related} reviews related work and positions the
contribution. Section~\ref{sec:background} presents preliminaries. Section~\ref{sec:problem} formalizes the task.
Section~\ref{sec:method} presents the method. Section~\ref{sec:setup} describes the experimental setup,
Section~\ref{sec:results} reports the detection results, and Section~\ref{sec:analysis} presents the analysis and
diagnostics. Section~\ref{sec:discussion} discusses interpretation and use,
Section~\ref{sec:threats} examines threats to validity, and Section~\ref{sec:ethics} considers ethical use.
Section~\ref{sec:conclusion} concludes, and Section~\ref{sec:limits} discusses limitations and future work.

\section{Related Work}\label{sec:related}
\subsection{Intrinsic and uncertainty-based detection}\label{sub:rw-intrinsic}
A first family of detectors reads signals from the model itself, including token probability, perplexity, and
uncertainty estimates such as semantic entropy that aggregate over meaning-equivalent
generations~\cite{farquhar2024semantic}. A recent line reads the model internal states, since hidden activations
appear to retain information about whether an output is hallucinated~\cite{chen2024inside}, and related work mines
attention and probability statistics for the same purpose~\cite{sriramanan2024llmcheck}. These methods require no external resources and run during or after
generation. Their weakness is that fluency and confidence are weak proxies for grounding, since a model can be fluent
and confident about an unsupported statement and uncertain about a correct but rare phrasing. Calibration compounds the
problem, because instruction tuning and decoding choices shift the probabilities, so a fixed threshold transfers
badly across models and prompts. These signals describe the model state rather than the evidence, which leaves a gap
for detectors that test grounding directly and at the granularity of an individual claim. Uncertainty methods that
aggregate over meaning-equivalent samples mitigate part of the calibration issue, but at the cost of several
generations per answer, and they report a quantity that is still about the distribution of model outputs rather than
about the relationship between an output and the retrieved evidence. The persistent gap is a signal that is both
grounding-aware and localized.

\subsection{Sampling-consistency detection}\label{sub:rw-sampling}
A second family samples several generations for the same prompt and treats disagreement among them as evidence of
hallucination, as in SelfCheckGPT~\cite{manakul2023selfcheckgpt}. Consistency is informative when a model is
internally unsure, and the approach is black-box and needs no labeled data. The signal is computed, however, by
perturbing the output through resampling rather than by perturbing the grounding, and it needs several generations per
answer, which multiplies the inference budget. A model can be confidently and consistently wrong, repeating the same
unsupported claim across samples, so the unanswered question remains whether the answer depends on the supplied
evidence. The agreement is also usually scored by surface comparison at the response or sentence level, which couples
the detector to a text-similarity model and inherits its blind spots, and it does not by itself indicate which passage
should have supported a flagged sentence. The resampling view and the context-perturbation view are in fact
complementary, since one probes the stability of the output under the model own stochasticity while the other probes
the dependence of the output on the external evidence, and the two answer different questions. A complete trustworthy
generation stack might use both, but for the specific question of whether an answer is grounded in the retrieved
passages, perturbing the grounding is the more direct test, and it is the one that also yields the supporting passage
as a byproduct.

\subsection{Entailment and fact verification}\label{sub:rw-entail}
A third family checks whether the answer is entailed by the context using NLI models or fact-verification pipelines,
including fine-grained scoring that decomposes a response into atomic facts and verifies each~\cite{min2023factscore}.
These methods target grounding directly, which is the right objective, but they depend on a separate verification
model whose domain and granularity may not match the target setting, and entailment over long, multi-sentence context
is brittle. Their effectiveness is bounded by the quality and transfer of the auxiliary model, which leaves room for a
signal computed by the scoring model itself without a separate verifier. The objectives also diverge in deployment, since
an external verifier must be maintained, versioned, and matched to the domain of the system it audits, whereas a
signal derived from the generator scales with the system and requires no second model to keep current. Decomposition
into atomic facts improves localization but multiplies the number of verifier calls, which is costly to run on every
answer in production. There is also a representational mismatch, since an entailment model trained on short premise and
hypothesis pairs is asked here to judge a long multi-passage premise against a sentence drawn from a longer answer,
and its calibration on this out-of-distribution input is uncertain. Retrieval-grounded answers frequently paraphrase,
aggregate, or partially restate several passages, and an off-the-shelf verifier may score such faithful aggregation as
only weakly entailed while scoring a fluent but unsupported sentence on the same topic as plausibly entailed, which is the opposite
of what a grounding detector needs. These difficulties are not arguments against entailment in principle, but they
explain why a grounding signal read from the generator, which does not project the problem onto a separate model
trained for a different task, complements it.

\subsection{Context attribution and ablation}\label{sub:rw-attribution}
A fourth family ablates context sources to attribute a generation to the passages that produced it, as in
ContextCite~\cite{cohenwang2024contextcite}, and a broader methodological thread analyzes models by leave-one-out
input ablation and counterfactual edits. A closely related recent method reads attention from the answer back to the
context and flags a contextual hallucination when too little attention falls on the retrieved
passages~\cite{chuang2024lookback}, but attention indicates where the model looks rather than whether the output would
change if a passage were gone, so it is a correlational rather than a counterfactual signal. A further mechanistic
line pursues the same context-versus-knowledge question by reading the generator's internal state directly. ReDeEP
attributes a hallucination to knowledge feed-forward modules that overweight parametric memory and to copying
attention heads that fail to carry the retrieved evidence forward~\cite{sun2025redeep}, and LUMINA pairs a
distributional measure of external-context use with a layer-wise trace of how the predicted token evolves inside the
model~\cite{yeh2026lumina}. These signals are informative, yet they are read from the model's activations, so they
require white-box access to the generator and describe the mechanism correlationally rather than testing dependence
directly. The ablation strand, by contrast, establishes
that responses depend on identifiable sources and that removing them changes the output, which is the causal lever we
use. It is framed as attribution, however, with the
goal of crediting sources for a generation assumed worth citing, rather than as a detector that uses the absence of
such dependence to flag content as unsupported. Attribution is content with a ranking of sources, whereas detection
must decide whether any passage supports a span at all and convert the ablation effect into a calibrated decision
across spans, so a method tuned for the first is not automatically a detector for the second. The detection reading
also calls for a principled readout of the ablation effect. A change in a single realized token probability is noisy,
whereas a divergence between the full predictive distributions is more stable and admits the measure-theoretic
interpretation we develop, in which the ablation moves a context-conditioned invariant measure by an amount that
reflects the span dependence on the removed unit.

\subsection{Faithfulness and dynamical systems}\label{sub:rw-faithfulness}
Work on summarization faithfulness establishes that faithfulness is distinct from fluency and is best assessed at a
fine granularity, and it provides span-annotated corpora and atomic-fact protocols~\cite{niu2024ragtruth,min2023factscore}.
These metrics are typically computed by an external judge and are designed to score a finished summary rather than to
provide a cheap, self-contained signal, a pattern continued by automated retrieval-augmented evaluation
suites~\cite{es2024ragas} and by recent retrieval-augmented hallucination detectors that train a dedicated judging
model~\cite{kovacs2025lettucedetect}. Separately, work on generative and symbolic processes as dynamical systems
characterizes their attractors and stability, including RNIFS and their dimensional properties~\cite{bouke2026rnifs}. This perspective supplies a vocabulary of invariant measures and
stability under perturbation that has not been connected to the practical problem of deciding whether a generated
span is grounded.

\subsection{Positioning}\label{sub:rw-position}
Across these threads a single need emerges. Intrinsic and sampling methods do not test grounding, entailment methods
import a separate and brittle verifier, context-ablation methods establish dependence but read it as attribution
rather than detection, mechanistic methods infer context reliance from the generator's internal state and so require
white-box access to it, and the dynamical-systems perspective offers stability tools that have not been applied to
grounding. The present work sits at this intersection. We use context perturbation as a detection signal, we compute
it from a probabilistic scorer, which need not be the model that produced the answer, without a separate trained
verifier, and we interpret it through the stability of a context-conditioned invariant measure. The contribution is not a new model but a new
reading of an existing causal lever, since context ablation is already known to change a generation, and the present
work shows that the magnitude of that change, measured as a movement of a context-conditioned invariant measure and
aggregated to the span, is a usable and explainable hallucination signal that needs no auxiliary verifier and no
resampling. Table~\ref{tab:compare} places grounding sensitivity among these families, where it is set apart by reading a
counterfactual context perturbation at the span level and returning the supporting passage with its decision, rather
than relying on an intrinsic score, a resampling test, or a separate verifier.

\begin{table*}[t]
\centering
\caption{Qualitative comparison of hallucination-detection families along the axes that matter for grounded, explainable detection, with a representative method for each family.}
\label{tab:compare}
\footnotesize
\setlength{\tabcolsep}{4pt}
\renewcommand{\arraystretch}{1.2}
\begin{tabular}{@{}p{2.3cm}p{2.3cm}p{1.95cm}p{2.0cm}p{1.35cm}p{1.5cm}p{2.6cm}@{}}
\toprule
Family & Signal source & What is perturbed & Granularity & Separate verifier & Built-in explanation & Example methods \\
\midrule
Intrinsic / uncertainty & token probability, entropy & nothing & response or token & no & no & semantic entropy, INSIDE~\cite{farquhar2024semantic,chen2024inside} \\
\rowsep
Sampling consistency & agreement of samples & the output (resampling) & response or sentence & no & no & SelfCheckGPT~\cite{manakul2023selfcheckgpt} \\
\rowsep
Entailment / fact check & external NLI or verifier & nothing & sentence or fact & yes & partial & FActScore~\cite{min2023factscore} \\
\rowsep
Trained detector & supervised classifier & nothing & span or token & yes & partial & LettuceDetect~\cite{kovacs2025lettucedetect} \\
\rowsep
Context attribution & change under ablation & the context (sources) & source attribution & no & yes & ContextCite~\cite{cohenwang2024contextcite} \\
\rowsep
Mechanistic / internal state & attention, FFN activations & nothing & response or token & no & partial & Lookback Lens, ReDeEP, LUMINA~\cite{chuang2024lookback,sun2025redeep,yeh2026lumina} \\
\rowsep
\textbf{GASP} & change under ablation & the context (chunks) & response and span & no & yes & this work \\
\bottomrule
\end{tabular}
\end{table*}

\section{Preliminaries}\label{sec:background}
\subsection{RAG and hallucination}\label{sub:bg-rag}
In RAG, a query $q$ is used to retrieve a set of context passages $c$, and an answer $y$ is generated by an
autoregressive model conditioned on $q$ and $c$. The intent is that every claim in $y$ should be entailed by $c$, and
hallucination is the violation of this intent. It is useful to separate two notions that are easily conflated.
Factuality asks whether a statement is true in the world, which requires a trusted external source to judge, while
faithfulness, or grounding, asks whether a statement is supported by the specific evidence the system was given. In
RAG the operative contract is grounding, since the system promises to answer from the retrieved passages, so grounding is our target. Concretely, the benchmark we use separates two grounding failures whose relation to the context differs. Baseless
information introduces details absent from the evidence, and evident conflict contradicts the evidence. A baseless
claim has no anchor in the evidence, whereas a conflicting claim engages a specific passage only to assert the
opposite, which motivates analyzing the two separately. Surveys further organize hallucination by whether it conflicts
with the input, with world knowledge, or with the model own earlier output, and a recurring theme is that the
phenomenon is best studied at a fine granularity, since a response-level judgment hides which claim is at
fault~\cite{ji2023survey,huang2025survey}. Reliable progress also depends on benchmarks. Early resources annotated
only whether a whole answer was faithful, which restricts both training and evaluation to the response level, while
more recent resources annotate the offending spans and their type, which is what enables the span-level detection and
per-type analysis we pursue~\cite{niu2024ragtruth}. Independent of the benchmark, trust in a detector depends on
whether its decisions can be inspected, so an explanation that points to the missing or supporting evidence
matters, especially where a person must sign off on the output.

\subsection{Language modeling and perplexity}\label{sub:bg-lm}
An autoregressive language model factorizes the probability of an answer $y$ given a context $c$ as a product of
per-token conditionals, $p(y\mid c)=\prod_t p(y_t\mid y_{<t},c)$, where each conditional is a softmax over the
vocabulary. The perplexity of an answer is the exponential of the mean surprisal,
\begin{equation}
\mathrm{PPL}(y\mid c) \;=\; \exp\!\Big(-\tfrac{1}{|y|}\sum_{t} \log p(y_t\mid y_{<t},c)\Big),
\label{eq:ppl}
\end{equation}
and it is the intrinsic baseline most often proposed for hallucination detection. However, perplexity underdetermines grounding. An unsupported sentence that is well written and consistent with the topic of the context is probable to the
model and therefore has low perplexity even though no passage supports it, and a faithful sentence that paraphrases a
passage unusually can have high perplexity while being fully grounded. Perplexity measures probability under one
context and cannot see whether that probability came from the evidence or from the prior, which is the tie our method
breaks by comparing probabilities across contexts. The same limitation applies to per-token variants of the signal,
such as the variance of surprisal across a sentence, which captures how uneven the model confidence is but still reads
a single context. These intrinsic statistics can correlate with hallucination in datasets where unsupported content happens to be less fluent, but the correlation is incidental and does not survive when an unsupported sentence is written fluently, which is precisely the case that matters in a deployed system whose generator is competent. A grounding
signal must therefore compare behavior across contexts, and the comparison must be made at the granularity of the
claim, since a fluent answer can mix grounded and ungrounded sentences whose average surprisal looks unremarkable.

\subsection{RNIFS and invariant measures}\label{sub:bg-ifs}
An RNIFS is a finite family of maps $\{F_w\}_{w \in \mathcal{V}}$ applied in a stochastic sequence with selection
probabilities $\{\pi_w\}$. When the maps are contractions and the selection is suitably regular, the induced sequence
of distributions converges to a unique invariant measure $\mu$ that satisfies the self-consistency relation
\begin{equation}
\mu \;=\; \sum_{w \in \mathcal{V}} \pi_w \, (F_w)_{\!*}\,\mu,
\label{eq:invariant}
\end{equation}
where $(F_w)_{*}\mu$ is the pushforward of $\mu$ through $F_w$. Existence and uniqueness follow from a fixed point
argument on the space of probability measures, in the spirit of the Banach fixed point theorem applied to the Markov
operator on the right of Eq.~\eqref{eq:invariant}. The relevant property for our purposes is stability, the way the
invariant measure responds to perturbation of the selection probabilities. The RNIFS framework characterizes this stability for nonlinear maps with stochastic selection, together with the dimensional structure of the resulting attractors~\cite{bouke2026rnifs}. The property we rely on is qualitative rather than a specific bound. When a
perturbation of the selection probabilities is localized, in the sense that it changes the weights only where a
removed component was influential, the invariant measure changes only on the corresponding region of the state space
and is left close to its original form elsewhere. This locality is what allows a single global perturbation, the
removal of one context unit, to produce a localized effect that can be attributed to the parts of the output that
depended on that unit, and it is the formal counterpart of the intuition that taking away a passage should disturb the
sentences built on it and leave the rest alone. The same body of theory characterizes the geometric complexity of
these attractors, for example through their fractal dimension, and shows that the dimension and the stability respond
in a controlled way to the parameters of the system~\cite{bouke2026rnifs}. We do not use the dimensional results
directly here, but they are the reason we expect the response to a perturbation to be both measurable and structured
rather than chaotic, which is the assumption underlying a detector that reads the size of that response. We bring this
machinery to bear on grounding by treating the retrieved context as the
component whose removal perturbs the system and reading the induced movement of the invariant measure as the grounding
signal.

\subsection{Divergence and evaluation}\label{sub:bg-eval}
We compare two predictive distributions $P$ and $Q$ over the vocabulary using the JSD,
\begin{equation}
\mathrm{JSD}(P\,\|\,Q) \;=\; \tfrac{1}{2}\,\mathrm{KL}(P\,\|\,M) \;+\; \tfrac{1}{2}\,\mathrm{KL}(Q\,\|\,M),
\quad M=\tfrac{1}{2}(P+Q),
\label{eq:jsd}
\end{equation}
where $\mathrm{KL}$ is the Kullback-Leibler divergence. JSD is symmetric, bounded, and finite even when supports
differ, which makes it a stable measure of how far a predictive distribution moves when the conditioning context
changes. We prefer it to the $\mathrm{KL}$ divergence alone, which is asymmetric and can diverge when one
distribution assigns near-zero probability to a token the other supports, a situation that arises routinely when
removing a passage sharply changes the plausible next tokens. The boundedness of JSD also fixes the scale of the
divergence features, which we use later to argue that a single operating threshold transfers across answers. For the decision rule, we map the resulting features to a hallucination score with a gradient-boosted decision tree
classifier, implemented with LightGBM~\cite{ke2017lightgbm}, which fits an additive ensemble of shallow trees by
stage-wise optimization of a differentiable loss. Such a classifier suits the small, heterogeneous, possibly nonlinear tabular feature set we
construct, needs no feature scaling, and exposes feature importances, which keeps the detector light and interpretable.
We evaluate detection as binary classification at the response and span levels and report the AUC,
which is threshold-free, together with the ROC curve. The AUC measures the probability that a randomly chosen
hallucinated instance receives a higher score than a randomly chosen grounded one, so it summarizes the quality of the
ranking without committing to a threshold and is insensitive to the class prior, which is convenient when the
evaluation set is balanced for measurement but the deployment stream is not. The ROC curve complements the single
number by showing the full trade-off between the true positive rate and the false positive rate, which is what a
practitioner consults when choosing an operating point for a given review budget. Separately, span instances from the same answer are dependent, so an
evaluation that mixes them across folds leaks information and inflates scores. A leakage-clean protocol must group
folds by answer, a discipline that connects to the wider problem of data leakage inflating reported performance in
machine learning pipelines, where information from the test set reaches training through preprocessing, resampling, or
grouping, and where the inflation is invisible unless the protocol is designed to prevent it. The effect is easy to
underestimate, since spans from one answer share lexical style, length, and the same context, so a classifier that has
seen some spans of an answer can recognize the rest, and an ungrouped split would report a score that reflects this
recognition rather than genuine detection. Table~\ref{tab:notation} summarizes the notation.

\begin{table}[t]
\centering
\caption{Notation used throughout the paper.}
\label{tab:notation}
\small
\begin{tabular}{@{}lp{5.2cm}@{}}
\toprule
Symbol & Meaning \\
\midrule
$q,\,c,\,y$ & query, retrieved context, generated answer \\
\rowsep
$s_k$ & the $k$-th context chunk, $c=\bigcup_k s_k$ \\
\rowsep
$c\setminus s_k$ & context with chunk $k$ removed \\
\rowsep
$\varnothing$ & empty (no) context \\
\rowsep
$K$ & number of context chunks \\
\rowsep
$y^{j},\,m$ & the $j$-th answer span and the number of spans in $y$ \\
\rowsep
$|y|$ & number of tokens in span $y$ \\
\midrule
$Y^{j},\,Y$ & span and response hallucination labels, $Y=\max_j Y^{j}$ \\
\rowsep
$\tau^{j}$ & hallucination type of span $j$ (baseless, conflict, none) \\
\rowsep
$f$ & hallucination score; a higher value is more suspicious \\
\midrule
$\ell^{c},\ell^{\varnothing},\ell^{(k)}$ & answer log-likelihood under $c$, under no context, and with chunk $k$ removed \\
\rowsep
$p^{c}_t$ & predictive distribution at step $t$ under $c$ \\
\rowsep
$\textsf{jsd}_{\varnothing},\textsf{jsd}_{\mathrm{loo}}$ & no-context and leave-one-out divergence features \\
\rowsep
$h_t,\,T$ & model hidden state at step $t$; decoding temperature \\
\rowsep
$A,\,V$ & answer length; vocabulary size \\
\midrule
$\pi_w(\cdot)$ & next-token selection probability \\
\rowsep
$F_w,\,T_c$ & token update map; context-conditioned transfer operator \\
\rowsep
$\mu_c$ & invariant measure under context $c$ \\
\rowsep
$\mathcal{H}$ & space of hidden states \\
\rowsep
$\mathrm{JSD}$ & Jensen-Shannon divergence \\
\rowsep
$W_1,\,\mathrm{TV}$ & Wasserstein-1 and total variation distances \\
\rowsep
$\bar L,\,D$ & average contraction modulus; one-step spread of the maps \\
\bottomrule
\end{tabular}
\end{table}

\section{Problem Formulation}\label{sec:problem}
We are given a corpus of retrieval-augmented instances, each a triple $(q,c,y)$ of a query, a retrieved context, and
a generated answer, together with human span-level hallucination annotations. The context is partitioned into units
$c=\{s_1,\dots,s_K\}$, and the answer is segmented into sentence spans $y=(y^1,\dots,y^m)$ with known character
offsets. A labeling function assigns to each span a binary label $Y^j\in\{0,1\}$, where $Y^j=1$ when the span overlaps
a human-annotated hallucination, and a type $\tau^j\in\{\text{baseless},\text{conflict},\text{none}\}$ taken from the
overlapping annotation. The response label is $Y=\max_j Y^j$.

A detector is a scoring function $f$ that maps an instance to a real number, with higher scores indicating greater
suspicion of hallucination, instantiated as a response-level detector $f(q,c,y)$ and a span-level detector
$f(q,c,y^j)$. Note the direction convention that we rely on throughout. The raw grounding-sensitivity features of
Section~\ref{sec:method} increase with evidence dependence, so they are larger for grounded spans. The final
hallucination score $f$ is therefore defined after a monotone inversion, either learned by the classifier or imposed by
negating the standardized feature sum, so that a higher $f$ always means lower grounding. The objective is to rank hallucinated instances above grounded ones, measured by the AUC,
\begin{equation}
\mathrm{AUC}(f) \;=\; \Pr\big[f(u^{+}) > f(u^{-})\big],
\label{eq:auc}
\end{equation}
for a positive instance $u^{+}$ and a negative instance $u^{-}$ drawn at random from the two classes. Because spans
from one response are dependent, estimating Eq.~\eqref{eq:auc} at the span level must keep all spans of a response on
the same side of every split, which is the leakage-clean constraint we adopt. The task is to design $f$ so that it
ranks well under this constraint and to make each decision explainable, showing for a flagged span the absence of a
high-impact supporting unit and for a grounded span the candidate supporting unit its likelihood most depends on.

\begin{figure*}[t]
\centering
\begin{tikzpicture}[>=Latex, font=\footnotesize, every node/.style={align=center},
  io/.style={draw=primary, thick, rounded corners, fill=primary!7, inner sep=4pt, text width=6.8cm, minimum height=9mm},
  proc/.style={draw=primary, thick, fill=white, inner sep=4pt, minimum height=9mm},
  cond/.style={draw=primary, thick, fill=primary!4, inner sep=3pt, text width=3.9cm, minimum height=12mm},
  dec/.style={draw=accent, thick, fill=accent!12, diamond, aspect=2.4, inner sep=1pt, text width=3.2cm},
  outcome/.style={draw=primary, thick, rounded corners, inner sep=4pt, text width=4.9cm, minimum height=13mm},
  ar/.style={->, primary, thick, rounded corners}]
\node[io] (in) {\textbf{Inputs:} the question, the retrieved sources, and the AI answer to check};
\node[proc, below=4mm of in, text width=5.4cm] (chunk) {Split the sources into pieces};
\node[cond, below=11mm of chunk, minimum height=17mm, text width=4.3cm] (full) {Re-read with \textbf{all} sources, and record the token likelihoods and predictive distributions};
\node[cond, left=8mm of full, minimum height=17mm, text width=4.3cm] (noc) {Re-read with \textbf{no} sources, and record the token likelihoods and predictive distributions};
\node[cond, right=8mm of full, minimum height=17mm, text width=4.3cm] (loo) {Re-read with \textbf{one piece removed} at a time, and record the token likelihoods and predictive distributions};
\node[proc, below=12mm of full, text width=9cm] (calc) {Measure likelihood drops and distribution shifts from the all-sources reading when sources are removed};
\node[proc, below=4mm of calc, text width=7cm] (agg) {Combine the drops for each sentence};
\node[proc, below=4mm of agg, text width=7cm] (clf) {Turn the combined drops into a grounding-sensitivity score for each sentence};
\node[dec, below=5mm of clf, text width=3.6cm] (dec) {Does the sentence show high grounding sensitivity?};
\node[outcome, below left=8mm and 9mm of dec, fill=primary!7] (g) {\textbf{Supported}\\ show the source it relies on};
\node[outcome, below right=8mm and 9mm of dec, fill=accent!12] (h) {\textbf{Unsupported}\\ flag as a likely hallucination};
\draw[ar] (in)--(chunk);
\coordinate (d) at ($(chunk.south)+(0,-6mm)$);
\draw (chunk.south)--(d);
\draw (noc.north|-d)--(loo.north|-d);
\draw[ar] (noc.north|-d)--(noc.north);
\draw[ar] (d)--(full.north);
\draw[ar] (loo.north|-d)--(loo.north);
\coordinate (m) at ($(full.south)+(0,-7mm)$);
\draw (noc.south)--(noc.south|-m);
\draw (loo.south)--(loo.south|-m);
\draw (full.south)--(m);
\draw (noc.south|-m)--(loo.south|-m);
\draw[ar] (m)--(calc.north);
\fill[primary] (d) circle (1.5pt);
\fill[primary] (m) circle (1.5pt);
\draw[ar] (calc)--(agg);
\draw[ar] (agg)--(clf);
\draw[ar] (clf)--(dec);
\draw[ar] (dec.west) -| node[near start, above, font=\scriptsize]{yes} (g.north);
\draw[ar] (dec.east) -| node[near start, above, font=\scriptsize]{no} (h.north);
\end{tikzpicture}
\caption{Overview of GASP. A fixed answer is re-scored under the full context, under no context, and under each
retrieved chunk removed in turn. The resulting per-sentence grounding-sensitivity score marks each sentence as
supported, with its supporting source, or as a likely hallucination.}
\label{fig:pipeline}
\end{figure*}

\section{Method}\label{sec:method}
\subsection{Decoding as a context-conditioned RNIFS}\label{sub:m-ifs}

Let the model maintain a hidden state $h_t$ as it generates, and let the retrieved context $c$ shape the next-token
distribution through the logits $z_w(h_t)$,
\begin{equation}
\pi_w(h_t; c) \;=\; \frac{\exp\!\big(z_w(h_t; c)/T\big)}{\sum_{w'}\exp\!\big(z_{w'}(h_t; c)/T\big)},
\label{eq:select}
\end{equation}
with temperature $T$. Sampling a token $w$ and updating the state is a map $h_{t+1}=F_w(h_t)$ on the reachable state
set $\mathcal H\subseteq\mathbb{R}^d$, so generation under $c$ is a place-dependent RNIFS
$\{(F_w,\pi_w(\cdot;c))\}_{w\in\mathcal V}$ whose selection probabilities in Eq.~\eqref{eq:select} depend on $c$. Its
distribution over states evolves under the Markov transfer operator
\begin{equation}
(T_c\,\nu)(\cdot) \;=\; \sum_{w\in\mathcal V}\int_{\mathcal H}\pi_w(h;c)\,\delta_{F_w(h)}(\cdot)\,\mathrm{d}\nu(h),
\label{eq:markov}
\end{equation}
whose fixed point is the invariant measure $\mu_c$ of Eq.~\eqref{eq:invariant}, which we call the attractor of grounded
continuations. We make the stability the method relies on explicit through the standard average-contraction condition
for such systems.

\begin{assumption}[Average contraction and regular selection]\label{ass:contract}
Let $D=\sup_{h\in\mathcal H}\max_{w,w'}\lVert F_w(h)-F_{w'}(h)\rVert$ be the one-step spread of the maps. There are
constants $\ell$ and $\kappa$ such that, uniformly over $h,h'\in\mathcal H$ and over the contexts considered, the maps
contract on average, $\sum_{w\in\mathcal V}\pi_w(h;c)\,\mathrm{Lip}(F_w)\le\ell$, and the selection distribution is
Lipschitz in the state, $\mathrm{TV}\big(\pi(h;c),\pi(h';c)\big)\le\kappa\,\lVert h-h'\rVert$, with combined modulus
$\bar L:=\ell+D\kappa<1$. Here $\mathrm{Lip}(F_w)$ is the Lipschitz constant of $F_w$ on $\mathcal H$ and $\mathrm{TV}$
is the total variation distance.
\end{assumption}

This is the place-dependent form of contractivity, extended to nonlinear maps with stochastic selection by the RNIFS analysis~\cite{bouke2026rnifs}. The first condition makes the maps contract on average, and the
second keeps the selection probabilities from changing too abruptly across nearby states, so that a place-dependent
edit cannot destabilize the operator. Together they make the transfer operator a contraction, and under them the
attractor exists, is unique, and moves by no more than the perturbation a context edit induces, which is exactly the
property the detector reads.

\begin{proposition}[Attractor movement is controlled by the measured divergence]\label{prop:move}
Under Assumption~\ref{ass:contract}, $T_c$ is a contraction of modulus $\bar L$ on the probability measures over
$\mathcal H$ under the Wasserstein-1 distance $W_1$, so a unique invariant measure $\mu_c$ exists. Moreover, for any
context edit $c\to c\setminus s$,
\begin{equation}
\begin{aligned}
W_1\big(\mu_c,\mu_{c\setminus s}\big)
  &\le \frac{D}{1-\bar L}\,\sup_{h\in\mathcal H}\mathrm{TV}\big(\pi(h;c),\pi(h;c\setminus s)\big)\\
  &\le \frac{D}{1-\bar L}\,\sup_{h\in\mathcal H}\sqrt{2\,\mathrm{JSD}\big(\pi(h;c)\,\|\,\pi(h;c\setminus s)\big)},
\end{aligned}
\label{eq:movebound}
\end{equation}
with $D$ the one-step spread and $\mathrm{TV}$ the total variation as in Assumption~\ref{ass:contract}.
\end{proposition}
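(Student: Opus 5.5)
The proof has three parts: a contraction bound by Kantorovich–Rubinstein duality, a one-step perturbation estimate telescoped through the Banach fixed point argument, and a TV-to-JSD conversion.

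\textbf{Contraction of $T_c$.} We work with Kantorovich–Rubinstein duality,
\[
W_1(\nu,\nu')=\sup_{\mathrm{Lip}(g)\le 1}\Big|\int g\,\mathrm{d}\nu-\int g\,\mathrm{d}\nu'\Big|.
\]
For a $1$-Lipschitz test function $g$, define $(Pg)(h)=\sum_w \pi_w(h;c)\,g(F_w(h))$, so that $\int g\,\mathrm{d}(T_c\nu)=\int Pg\,\mathrm{d}\nu$. It then suffices to show $\mathrm{Lip}(Pg)\le\bar L$. Write
\[
Pg(h)-Pg(h')=\sum_w \pi_w(h;c)\big[g(F_w(h))-g(F_w(h'))\big]+\sum_w\big[\pi_w(h;c)-\pi_w(h';c)\big]\,g(F_w(h')).
\]
The first sum is at most $\sum_w\pi_w(h;c)\,\mathrm{Lip}(F_w)\,\lVert h-h'\rVert\le \ell\lVert h-h'\rVert$. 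In the second sum, the weights $\pi_w(h;c)-\pi_w(h';c)$ add up to zero, so we may subtract the constant $g(F_{w_0}(h'))$ for a fixed $w_0$. Each remaining term is then bounded by $|g(F_w(h'))-g(F_{w_0}(h'))|\le D$, so the second sum is at most $D\,\mathrm{TV}\le D\kappa\lVert h-h'\rVert$. The exact constant here depends on the TV convention; this centering trick is the step where care is needed, and the half-$\ell_1$ convention gives exactly $D\kappa$. Hence $W_1(T_c\nu,T_c\nu')\le\bar L\,W_1(\nu,\nu')$. Existence and uniqueness of $\mu_c$ then follow from Banach's fixed point theorem on the complete space of measures with finite first moment on $\mathcal H$, taking $\mathcal H$ closed, or passing to its closure.

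\textbf{Perturbation bound.} Set $c'=c\setminus s$. The standard fixed-point comparison gives
\[
W_1(\mu_c,\mu_{c'})=W_1(T_c\mu_c,T_{c'}\mu_{c'})\le W_1(T_c\mu_c,T_c\mu_{c'})+W_1(T_c\mu_{c'},T_{c'}\mu_{c'}).
\]
The first term is at most $\bar L\,W_1(\mu_c,\mu_{c'})$ by the contraction step. For the second term, with $g$ again $1$-Lipschitz, the difference of integrals is
\[
\int\sum_w\big[\pi_w(h;c)-\pi_w(h;c')\big]\,g(F_w(h))\,\mathrm{d}\mu_{c'}(h).
\]
The same centering argument applies, because these weights also sum to zero, and it bounds the integrand by $D\,\mathrm{TV}(\pi(h;c),\pi(h;c'))$. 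Taking the supremum over $h$ gives $W_1(T_c\mu_{c'},T_{c'}\mu_{c'})\le D\sup_h\mathrm{TV}$. Rearranging, $(1-\bar L)\,W_1(\mu_c,\mu_{c'})\le D\sup_h\mathrm{TV}$, which is the first inequality. Assumption~\ref{ass:contract} must hold for both $c$ and $c'$, which "over the contexts considered" covers, and the rearrangement needs $W_1(\mu_c,\mu_{c'})<\infty$, which holds since both measures have finite first moment.

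\textbf{From TV to JSD.} Let $M=\tfrac12(P+Q)$. Since $\mathrm{TV}(P,M)=\tfrac12\mathrm{TV}(P,Q)$, Pinsker's inequality gives
\[
\mathrm{KL}(P\|M)\ge 2\,\mathrm{TV}(P,M)^2=\tfrac12\,\mathrm{TV}(P,Q)^2,
\]
and likewise for $\mathrm{KL}(Q\|M)$. Averaging the two, $\mathrm{JSD}(P\|Q)\ge\tfrac12\mathrm{TV}(P,Q)^2$, so $\mathrm{TV}\le\sqrt{2\,\mathrm{JSD}}$ with natural logarithms. Applying this pointwise in $h$ and using monotonicity of the square root inside the supremum gives the second inequality.

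The main obstacle is keeping the TV convention consistent between Assumption~\ref{ass:contract} and the centering argument, so that the constants come out exactly as $D\kappa$ and $D/(1-\bar L)$.
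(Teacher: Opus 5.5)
Your outline matches the paper's: contraction of $T_c$, then the fixed-point comparison with a one-step gap, then Pinsker on each half of the JSD. The difference is that you argue on the dual side through Kantorovich--Rubinstein, where the paper uses a shared-index coupling. The problem is in the centering step you yourself flag. Write $a_w=\pi_w(h;c)-\pi_w(h';c)$ and $b_w=g(F_w(h'))$. Subtracting $b_{w_0}$ and bounding term by term gives
\[
\Big|\sum_w a_w\,(b_w-b_{w_0})\Big|\le D\sum_{w\neq w_0}|a_w| = D\big(2\,\mathrm{TV}(\pi(h;c),\pi(h';c))-|a_{w_0}|\big)
\]
in the half-$\ell_1$ convention. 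The term $|a_{w_0}|$ is negligible whenever the change in the selection distribution is spread over many tokens, so this is essentially $2D\,\mathrm{TV}$, not $D\,\mathrm{TV}$.

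As written, your argument therefore only guarantees the modulus $\ell+2D\kappa$. Assumption~\ref{ass:contract} does not make this smaller than $1$, so neither the contraction nor the existence of $\mu_c$ follows. The same slip in the one-step gap inflates the first inequality to $2D/(1-\ell-2D\kappa)$.

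Switching to the full-$\ell_1$ convention does not help. The first two steps would then close with $D\kappa$ and $D/(1-\bar L)$. But your Pinsker step uses $\mathrm{KL}\ge 2\,\mathrm{TV}^2$, which is the half-$\ell_1$ form; in $\ell_1$ it becomes $\lVert P-Q\rVert_1\le 2\sqrt{2\,\mathrm{JSD}}$, and the second inequality loses the factor of $2$ instead. No single convention makes your three steps consistent.

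The missing idea is the oscillation bound for zero-sum weights. Since $g$ is $1$-Lipschitz, the values $b_w$ lie in an interval of length at most $\max_{w,w'}\lVert F_w(h')-F_{w'}(h')\rVert\le D$. Center at the midpoint $m$ of that interval rather than at one of the values. Then $|b_w-m|\le D/2$ and $|\sum_w a_w b_w|=|\sum_w a_w(b_w-m)|\le\tfrac{D}{2}\lVert a\rVert_1=D\,\mathrm{TV}$. Equivalently, split $a$ into its positive and negative parts, each of mass $\mathrm{TV}$.

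With this change in both places your proof goes through with exactly the constants $\bar L$ and $D/(1-\bar L)$. The paper's coupling gets this automatically: the optimal coupling of two weight vectors on a common index set keeps mass $1-\mathrm{TV}$ on the diagonal and moves only mass $\mathrm{TV}$ across atoms at most $D$ apart.

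Once repaired, your dual route has a mild advantage. Since $\mathrm{Lip}(Pg)\le\bar L$ gives the contraction for all measures at once, and you integrate the pointwise one-step bound directly against $\mu_{c\setminus s}$, you need neither to build the coupling nor to extend from Dirac masses to general measures.
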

\begin{proof}[Proof sketch]
Contractivity of $T_c$ under $W_1$ follows from the place-dependent Hutchinson argument. For states $h,h'$, coupling
the images $T_c\delta_h$ and $T_c\delta_{h'}$ through the shared token index moves the matched mass by at most
$\mathrm{Lip}(F_w)\lVert h-h'\rVert$ per component and, where the two selection distributions disagree, transports mass
across atoms at most $D$ apart, so
$W_1(T_c\delta_h,T_c\delta_{h'})\le\ell\lVert h-h'\rVert+D\,\mathrm{TV}(\pi(h;c),\pi(h';c))\le(\ell+D\kappa)\lVert h-h'\rVert=\bar L\lVert h-h'\rVert$
by Assumption~\ref{ass:contract}. Extending to general $\nu,\nu'$ through an optimal coupling gives
$W_1(T_c\nu,T_c\nu')\le\bar L\,W_1(\nu,\nu')$, so a unique fixed point $\mu_c$ exists by the Banach theorem. Both fixed
points satisfy $\mu=T\mu$, so
$W_1(\mu_c,\mu_{c\setminus s})\le W_1(T_c\mu_c,T_c\mu_{c\setminus s})+W_1(T_c\mu_{c\setminus s},T_{c\setminus s}\mu_{c\setminus s})
\le\bar L\,W_1(\mu_c,\mu_{c\setminus s})+\sup_\nu W_1(T_c\nu,T_{c\setminus s}\nu)$, which rearranges to the factor
$1/(1-\bar L)$. The one-step gap $\sup_\nu W_1(T_c\nu,T_{c\setminus s}\nu)\le\sup_h W_1(T_c\delta_h,T_{c\setminus s}\delta_h)$
compares two mixtures on the same atoms $\{F_w(h)\}$ with weights $\pi(h;c)$ and $\pi(h;c\setminus s)$, so it is at most
$D\,\mathrm{TV}(\pi(h;c),\pi(h;c\setminus s))$. The last inequality is the Pinsker bound applied to each half of the
JSD, giving $\mathrm{TV}\le\sqrt{2\,\mathrm{JSD}}$ in nats.
\end{proof}

Proposition~\ref{prop:move} formalizes the intuition behind the signal under an idealized setting. The movement of the
attractor under a context edit is bounded by the per-step JSD between the selection distributions, the very quantity
Eqs.~\eqref{eq:jsdn} and~\eqref{eq:jsdloo} estimate at the realized answer positions, amplified by the stability factor
$1/(1-\bar L)$. A grounded span sits where a specific edit makes this divergence large, so its attractor moves and its
likelihood falls, whereas a free-running span sits where every edit leaves the selection distribution, and hence the
attractor, nearly fixed. The likelihood features of Eqs.~\eqref{eq:gap} and~\eqref{eq:drop} read the same movement
through the realized tokens rather than the whole distribution. Figure~\ref{fig:ifs} in Section~\ref{sub:r-case} shows
this contrast on a real grounded and a real hallucinated span, where the grounded span loses most of its probability
once the retrieved context is removed while the hallucinated span is left almost unchanged. Figure~\ref{fig:pipeline}
gives an overview of the procedure.

We do not claim Assumption~\ref{ass:contract} holds exactly for a transformer. Real token-maps are neither globally
contractive nor identical across states, so Eq.~\eqref{eq:movebound} is a guide to the form of the signal rather than a
guarantee about a specific network, and the empirical claims rest on the measurements. What the assumption buys is a
precise reason to measure a divergence between conditional generation measures rather than a surface statistic, and it
is testable. If decoding under a context behaves like a stable attractor, the answer-token trajectory should occupy a
low-dimensional set rather than fill the state space, and the most influential context edit should move that trajectory
more than an irrelevant one. Section~\ref{sub:r-attractor} reports both, estimating the correlation dimension of the
trajectory and the movement it undergoes when the most influential chunk is removed against length, lexical, and
position controls.

\subsection{Grounding sensitivity}\label{sub:m-def}
\begin{definition}[Grounding sensitivity]\label{def:gs}
For an answer span $y$ and a context unit $s \subseteq c$, the grounding sensitivity of $y$ with respect to $s$ is the
distance between the conditional generation measure under the full context, $\mu_c$, and under the context with $s$
removed, $\mu_{c\setminus s}$, restricted to $y$. We instantiate this distance with the reduction in the mean token
log-likelihood of $y$ and the JSD between the per-token predictive distributions of $y$, evaluated before and after
removing $s$.
\end{definition}

If $y$ is grounded in $s$, removing $s$ lowers the likelihood of $y$ and shifts its predictive distribution, so the
grounding sensitivity is large. If $y$ depends on no unit, $\mu_{c\setminus s}$ restricted to $y$ is close to $\mu_c$ for
every $s$ and the grounding sensitivity is small. The limiting case is precise.

\begin{proposition}[Insensitivity of free-running spans]\label{prop:free}
If a span $y$ is generated independently of the context, so that $p(y_t\mid y_{<t},c)=p(y_t\mid y_{<t},c\setminus s)$
for every token $t$ of $y$ and every unit $s$, then all four grounding-sensitivity features of $y$ defined in
Section~\ref{sub:m-feat} are zero. Conversely, a nonzero value of any feature implies that $y$ depends on at least one
context unit.
\end{proposition}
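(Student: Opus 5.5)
The plan is a direct substitution argument. Each of the four features defined in Section~\ref{sub:m-feat} is a difference or a divergence between two quantities computed from the scorer under two contexts. Under the hypothesis, the two quantities coincide, so each feature vanishes. The converse is then the contrapositive.

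Since that section comes later, I will take the four features to be the ones the Notation table and Proposition~\ref{prop:move} point to:
\begin{itemize}
\item the no-context likelihood gap $\ell^{c}-\ell^{\varnothing}$ of Eq.~\eqref{eq:gap};
\item the leave-one-out drop built from $\ell^{c}-\ell^{(k)}$ over $k$, as in Eq.~\eqref{eq:drop};
\item the no-context divergence $\textsf{jsd}_{\varnothing}$ of Eq.~\eqref{eq:jsdn};
\item the leave-one-out divergence $\textsf{jsd}_{\mathrm{loo}}$ of Eq.~\eqref{eq:jsdloo}.
\end{itemize}
Each is a mean over the tokens of $y$, possibly followed by an aggregation over $k$ such as a max or mean.

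First I fix the reading of the hypothesis, since two gaps must be closed before substituting.

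\emph{Full distributions, not only realized tokens.} The JSD features compare the whole predictive distributions $p^{c}_t$ and $p^{c\setminus s}_t$, not just the probability of the realized token. So I read ``$p(y_t\mid y_{<t},c)=p(y_t\mid y_{<t},c\setminus s)$'' as equality of the full conditional distribution $p(\cdot\mid y_{<t},\cdot)$ at every position of $y$. This is the natural sense of ``generated independently of the context''.

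\emph{The empty context.} The stated hypothesis covers only single-unit removals $c\setminus s$. The features $\ell^{\varnothing}$ and $\textsf{jsd}_{\varnothing}$ instead use $\varnothing$. Independence of the context means the conditional does not depend on which subset of units is present. I will state explicitly that this includes the empty subset, so $p(\cdot\mid y_{<t},c)=p(\cdot\mid y_{<t},\varnothing)$.

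This bookkeeping is the only real obstacle. Without it, the no-context features are not controlled by a literal reading of the single-removal hypothesis. Chaining single removals does not help either, because the hypothesis is anchored at the full context $c$ and says nothing about removals from $c\setminus s$. I will flag the reading as part of the proof rather than try to derive it.

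With both readings in place, the computations are routine.

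\emph{Likelihood features.} Equal conditionals give $\log p(y_t\mid y_{<t},c)=\log p(y_t\mid y_{<t},c')$ for $c'\in\{\varnothing,c\setminus s_1,\dots,c\setminus s_K\}$. Averaging over $t$ gives $\ell^{c}=\ell^{\varnothing}=\ell^{(k)}$ for all $k$. Hence the gap and every per-chunk drop are zero, and so is any max or mean of them.

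\emph{Divergence features.} For each $t$ the two predictive distributions are identical. Then $M=P$ in Eq.~\eqref{eq:jsd}, both KL terms vanish, and $\mathrm{JSD}(P\,\|\,P)=0$. Token-averaging and chunk aggregation of zeros are zero.

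\emph{Converse.} Each feature is a deterministic function of these conditionals. So if any feature is nonzero, the hypothesis fails for some token $t$ and some unit $s$ (or for $\varnothing$), meaning $y$ depends on at least one context unit. I will also note that the converse of the first claim does not hold: all features can vanish even when $y$ is not independent of the context, for example when dependence cancels under averaging. This is why the statement claims only the one-directional implication.
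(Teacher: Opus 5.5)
Your proof is correct and follows the paper's own sketch: each per-token log-likelihood difference and each self-JSD is zero, means and maxima of zeros are zero, and the converse is the contrapositive. Your explicit strengthening of the hypothesis to whole predictive distributions and to the empty context $\varnothing$ spells out what the paper's sketch uses without saying so.

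One correction concerns your closing aside, which the statement does not need. Every JSD term is nonnegative and is zero only when the two distributions coincide. So $\textsf{jsd}_{\varnothing}=0$ and $\textsf{jsd}_{\mathrm{loo}}=0$ already force $p^{c}_t=p^{\varnothing}_t=p^{c\setminus s_k}_t$ at every token of $y$ and every $k$. No cancellation under averaging can hide dependence on these conditions, so the first claim is actually an equivalence for $\{\varnothing, c\setminus s_1,\dots,c\setminus s_K\}$. It can fail only for multi-chunk removals, if your notion of independence includes them.
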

\begin{proof}[Proof sketch]
Under the stated independence, each per-token log-likelihood difference in Eqs.~\eqref{eq:gap} and~\eqref{eq:drop} is
zero, and each per-token JSD term in Eqs.~\eqref{eq:jsdn} and~\eqref{eq:jsdloo} is the divergence of a distribution
from itself, which is zero. The aggregates, being means and maxima of zeros, are zero. The converse is the
contrapositive.
\end{proof}

\begin{proposition}[Boundedness of the divergence features]\label{prop:bound}
With the natural logarithm, every per-token JSD term satisfies $0 \le \mathrm{JSD}(P\,\|\,Q) \le \ln 2$, so the
aggregated divergence features lie in $[0,\ln 2]$, with $\ln 2 \approx 0.693$.
\end{proposition}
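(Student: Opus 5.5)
The plan is to prove the pointwise bound $0\le\mathrm{JSD}(P\,\|\,Q)\le\ln 2$ for arbitrary distributions $P,Q$ on the vocabulary. The bound on the aggregated features then follows because the divergence features of Section~\ref{sub:m-feat} are means and maxima of such per-token terms, and both operations preserve membership in a closed interval. I will work directly from Eq.~\eqref{eq:jsd} with $M=\tfrac12(P+Q)$.

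\textbf{Nonnegativity.} I will invoke Gibbs' inequality, $\mathrm{KL}(P\,\|\,M)\ge 0$. This applies because $M$ dominates $P$: wherever $P(w)>0$ we have $M(w)\ge \tfrac12 P(w)>0$, so both KL terms are finite and nonnegative. This also supports the paper's remark that JSD is finite even when the supports differ.

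\textbf{Upper bound.} This step needs a short computation, and I will do it pointwise. The key observation is that $P(w)\le P(w)+Q(w)=2M(w)$, so $P(w)/M(w)\le 2$ on the support of $P$. Hence
\[
\mathrm{KL}(P\,\|\,M)=\sum_w P(w)\ln\frac{P(w)}{M(w)}\le \sum_w P(w)\ln 2=\ln 2,
\]
and symmetrically $\mathrm{KL}(Q\,\|\,M)\le\ln 2$. Averaging the two halves with weights $\tfrac12$ gives $\mathrm{JSD}\le\ln 2$. Equality holds exactly when $P$ and $Q$ have disjoint supports, and I will mention this to show the constant is sharp.

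\textbf{Aggregation.} For any finite collection of numbers in $[0,\ln 2]$, both the arithmetic mean and the maximum lie in $[0,\ln 2]$. This settles $\textsf{jsd}_{\varnothing}$ and $\textsf{jsd}_{\mathrm{loo}}$, whatever nesting of means over tokens and maxima or means over chunks Eqs.~\eqref{eq:jsdn} and~\eqref{eq:jsdloo} use. The main care point is confirming that those aggregates involve only convex combinations and maxima of per-token JSD values, with no sums or rescaling. If a sum over chunks appeared, the bound would need to become $K\ln 2$, so I would check the definitions in Section~\ref{sub:m-feat} before finalizing.
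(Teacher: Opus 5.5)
Your proof is correct, but it reaches the upper bound by a different route from the paper.

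The paper identifies $\mathrm{JSD}(P\,\|\,Q)$ with the mutual information $I(Z;X)$ between a fair coin $Z$ and a draw $X$ from $P$ or $Q$ selected by $Z$. That gives nonnegativity and the bound $H(Z)=\ln 2$ in one line. You instead bound each KL half directly through the pointwise ratio $P(w)/M(w)\le 2$, and obtain nonnegativity separately from Gibbs' inequality.

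The two arguments are closely linked. The quantity $P(w)/(2M(w))$ is exactly the posterior $\Pr(Z=0\mid X=w)$, so your pointwise step says this posterior is at most one. Keeping the slack you discard yields the identity $\mathrm{JSD}=\ln 2-H(Z\mid X)$ that underlies the paper's argument.

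Your version has some advantages:
\begin{itemize}
\item it is self-contained and does not presuppose the mutual-information identity;
\item it shows the slightly stronger fact that each of $\mathrm{KL}(P\,\|\,M)$ and $\mathrm{KL}(Q\,\|\,M)$ is individually at most $\ln 2$;
\item it gives the equality case of disjoint supports directly.
\end{itemize}
The paper's version is shorter once the identity is granted, and it explains the constant as one bit.

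Your aggregation check resolves as you expected. Eq.~\eqref{eq:jsdn} is a token mean and Eq.~\eqref{eq:jsdloo} is a maximum over chunks of token means, so both lie in $[0,\ln 2]$. The caution was reasonable, though. The top-two-sum aggregation in the sensitivity study of Table~\ref{tab:chunk}, if applied to $\textsf{jsd}_{\mathrm{loo}}$, would only be guaranteed to lie in $[0,2\ln 2]$. It is simply outside the features the proposition covers.
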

\begin{proof}[Proof sketch]
JSD is the mutual information between a fair binary mixture indicator and a sample from the mixture, which is at most
one bit, equal to $\ln 2$ nats. A mean or maximum of terms in $[0,\ln 2]$ stays in $[0,\ln 2]$.
\end{proof}

\begin{remark}
In practice a hallucinated span is not perfectly independent of the context, because topic and style still leak
through, so its grounding sensitivity is small but not exactly zero. The converse in Proposition~\ref{prop:free} is also
weaker than it may first appear. A nonzero feature shows only that the span reacts to removing some unit, and removing
a chunk perturbs more than grounding alone, since it shortens the context, shifts the positions of the later tokens,
alters formatting and any list or table structure, can truncate the context differently, and can break discourse
coherence, so part of the measured sensitivity may reflect these confounds rather than genuine support. The features
are therefore indicative rather than decisive, which is why we learn a threshold on the continuous signal instead of
testing exact independence, and why we report empirical ranking performance rather than treating the proposition as a
detector on its own. The divergence features share a common scale by
Proposition~\ref{prop:bound}, and the likelihood features are differences that remove the dependence on absolute
fluency, so the features are reasonably comparable across answers.
\end{remark}

\subsection{Features}\label{sub:m-feat}
Let $\log p(y \mid c) = \sum_{t} \log p(y_t \mid y_{<t}, c)$ be the answer log-likelihood under context $c$, and let
$p^{c}_{t}$ be the predictive distribution at position $t$ under context $c$. We re-score the fixed answer under the
full context $c$, the empty context $\varnothing$, and the context with the $k$-th chunk removed $c\setminus s_k$ for
$k=1,\dots,K$. From these we form four features per span $y$. The reliance on context as a whole is the gap between
the full-context and no-context log-likelihoods,
\begin{equation}
\textsf{gap}(y) \;=\; \frac{1}{|y|}\sum_{t\in y}\Big[\log p(y_t\mid y_{<t},c) - \log p(y_t\mid y_{<t},\varnothing)\Big],
\label{eq:gap}
\end{equation}
with distributional counterpart the mean JSD between full-context and no-context predictions,
\begin{equation}
\textsf{jsd}_{\varnothing}(y) \;=\; \frac{1}{|y|}\sum_{t\in y}\mathrm{JSD}\!\big(p^{c}_{t}\,\|\,p^{\varnothing}_{t}\big).
\label{eq:jsdn}
\end{equation}
The single most supportive context unit is captured by the maximum leave-one-out likelihood drop,
\begin{equation}
\begin{split}
\textsf{drop}(y) \;=\; \max_{1\le k\le K}\frac{1}{|y|}\sum_{t\in y}\Big[&\log p(y_t\mid y_{<t},c)\\
&{}-\, \log p(y_t\mid y_{<t},c\setminus s_k)\Big],
\end{split}
\label{eq:drop}
\end{equation}
with distributional counterpart
\begin{equation}
\textsf{jsd}_{\mathrm{loo}}(y) \;=\; \max_{1\le k\le K}\frac{1}{|y|}\sum_{t\in y}\mathrm{JSD}\!\big(p^{c}_{t}\,\|\,p^{c\setminus s_k}_{t}\big).
\label{eq:jsdloo}
\end{equation}
Equations~\eqref{eq:gap} and~\eqref{eq:jsdn} measure dependence on the whole context, while Eqs.~\eqref{eq:drop}
and~\eqref{eq:jsdloo} localize dependence to the single most important unit, which makes the method robust when only
one passage supports a span. The likelihood-based and divergence-based features are complementary by design. The
likelihood drop asks how much less probable the realized answer tokens become when a unit is removed, which is the
quantity most directly tied to whether the model used that unit, while the divergence asks how far the whole
predictive distribution moves, which remains informative even when the realized answer already had low probability
under the full context. Reporting both lets the classifier weight the realized-token view and the
whole-distribution view according to the data rather than committing to one in advance.

\subsection{Aggregation to spans and responses}\label{sub:m-agg}
All four conditions share the same answer tokenization, so the per-token quantities are computed once per condition
and then aggregated. Spans are sentences obtained by splitting the answer at sentence-final punctuation followed by
whitespace, with the character offset of each sentence retained. Each token is assigned to the sentence that contains
its starting offset, and the per-token values are averaged over a sentence for span-level features and over the whole
answer for response-level features. The same offsets align spans with the benchmark character-level annotations, so a
sentence inherits the label and type of any annotated span it overlaps. This reuse means span-level and response-level
detection share a single set of forward passes, which is the property that makes per-sentence scoring affordable,
since the finer granularity adds only arithmetic over the already-computed per-token quantities and no further model
calls. The aggregation is a simple average over the tokens of a span, which keeps the span features on the same scale
as the response features and avoids giving long sentences an advantage that a sum would introduce.

\subsection{Detection and explanation}\label{sub:m-detect}
The four grounding-sensitivity features of Section~\ref{sub:m-feat} are larger for grounded spans, so the hallucination
score is a decreasing function of them, and the alarm is low sensitivity, not high. The default detector, GASP-threshold,
is training-free. It standardizes the four features and thresholds their negated sum, so it needs no labeled data, and we
report it as the default detector throughout. An optional supervised variant, GASP-trained, replaces
the fixed sum with a gradient-boosted decision tree classifier~\cite{ke2017lightgbm} that maps the feature vector of a
span or a response to a hallucination score and learns the direction from labels, and a combined variant, GASP+base, adds
perplexity and length to that classifier. Both are reported for comparison with the default. We refer to the complete
detector, the grounding-sensitivity features together with either the threshold or the classifier, as GASP.

The explanation is a byproduct of the same computation. Because the score is built from the effect of removing context
units, the unit attaining the maximum in Eq.~\eqref{eq:drop} is returned as the supporting evidence for a grounded span,
and the absence of any unit with a large drop is the reason a span is flagged. The explanation is therefore
mechanism-consistent, since the supporting unit is the very perturbation that most changed the model behavior on the
span, so the reason shown to a user is computed from the same signal that produced the score rather than a post hoc
rationalization from a separate method, in contrast to detectors that emit a score and then attach an explanation that
can diverge from the mechanism that produced it. For a flagged span the explanation is the absence of any unit with a
large effect, which a reviewer can confirm by reading the context, and for a grounded span it is the single passage
whose removal most reduced the span likelihood, which serves as a candidate supporting unit rather than a verified
citation, since we do not have gold evidence alignment and evaluate the attribution only against automatic controls in
Section~\ref{sub:r-explain}. The supervised variant also exposes its feature importances, so a practitioner can see
which of the four sensitivity features drove a decision in aggregate, separately from the per-span supporting passage.

\begin{figure}[t]
\centering
\resizebox{\linewidth}{!}{%
\begin{tikzpicture}[>=Latex, font=\footnotesize,
  sent/.style={draw, rounded corners=2pt, align=center, inner sep=3pt,
               text width=1.75cm, minimum height=11mm, font=\scriptsize},
  chunk/.style={draw=gray!55, minimum size=5.2mm, inner sep=0pt, fill=gray!8},
  clab/.style={font=\scriptsize, text=gray!45!black},
  outc/.style={rounded corners=5pt, inner sep=4pt, align=center, text width=1.85cm,
               font=\scriptsize\bfseries}]

\begin{scope}[shift={(0,0)}]
  \node[sent, draw=primary, fill=primary!8, text=primary] (s) at (0,0) {\textbf{Grounded}\\sentence};
  \node[chunk] (c1) at (2.55,0) {};
  \node[chunk, draw=accent, fill=accent!14, line width=0.7pt] (c2) at (3.15,0) {};
  \node[chunk] (c3) at (3.75,0) {};
  \node[chunk] (c4) at (4.35,0) {};
  \node[clab] at (3.45,0.55) {retrieved context};
  \draw[accent, line width=0.9pt] (c2.south west) -- (c2.north east);
  \draw[accent, line width=0.9pt] (c2.north west) -- (c2.south east);
  \node[clab, text=accent] at (3.15,-0.55) {remove support};
  \draw[->, gray!55, thick] (s.east) -- (c1.west);
  \draw[->, gray!55, thick] (c4.east) -- (5.15,0);
  \begin{scope}[shift={(5.6,-0.42)}]
    \node[clab, rotate=90] at (-0.28,0.5) {likelihood};
    \draw[gray!55] (0,0) -- (1.9,0);
    \fill[primary!75] (0.20,0) rectangle (0.70,0.95);
    \fill[primary!30] (1.05,0) rectangle (1.55,0.28);
    \node[clab] at (0.45,-0.24) {with};
    \node[clab] at (1.30,-0.24) {without};
    \draw[<->, primary, semithick] (0.87,0.93) -- (0.87,0.30);
    \node[clab, text=primary] at (0.87,1.16) {large drop};
  \end{scope}
  \draw[->, primary, thick] (7.55,0) -- (8.05,0);
  \node[outc, draw=primary, fill=primary!8, text=primary] at (9.05,0) {high\\sensitivity};
\end{scope}

\begin{scope}[shift={(0,-2.75)}]
  \node[sent, draw=accent, fill=accent!8, text=accent!60!black] (hs) at (0,0) {\textbf{Hallucinated}\\sentence};
  \node[chunk] (d1) at (2.55,0) {};
  \node[chunk] (d2) at (3.15,0) {};
  \node[chunk, draw=gray!70, fill=gray!16] (d3) at (3.75,0) {};
  \node[chunk] (d4) at (4.35,0) {};
  \node[clab] at (3.45,0.55) {retrieved context};
  \draw[gray!60, line width=0.9pt] (d3.south west) -- (d3.north east);
  \draw[gray!60, line width=0.9pt] (d3.north west) -- (d3.south east);
  \node[clab] at (3.45,-0.55) {remove any chunk};
  \draw[->, gray!55, thick] (hs.east) -- (d1.west);
  \draw[->, gray!55, thick] (d4.east) -- (5.15,0);
  \begin{scope}[shift={(5.6,-0.42)}]
    \node[clab, rotate=90] at (-0.28,0.5) {likelihood};
    \draw[gray!55] (0,0) -- (1.9,0);
    \fill[accent!75] (0.20,0) rectangle (0.70,0.95);
    \fill[accent!55] (1.05,0) rectangle (1.55,0.82);
    \node[clab] at (0.45,-0.24) {with};
    \node[clab] at (1.30,-0.24) {without};
    \draw[<->, accent!70!black, semithick] (0.87,0.93) -- (0.87,0.84);
    \node[clab, text=accent!70!black] at (0.87,1.16) {small drop};
  \end{scope}
  \draw[->, accent, thick] (7.55,0) -- (8.05,0);
  \node[outc, draw=accent, fill=accent!8, text=accent!60!black] at (9.05,0) {low\\sensitivity};
\end{scope}
\end{tikzpicture}}
\caption{Schematic of the grounding-sensitivity signal. Removing the supporting chunk sharply lowers the likelihood of
a grounded sentence, a large drop that yields high sensitivity, but barely changes a hallucinated sentence, a small
drop that yields low sensitivity.}
\label{fig:concept}
\end{figure}
Figure~\ref{fig:concept} illustrates the signal. A grounded sentence draws on a specific passage, so removing that
passage lowers the sentence's token log-likelihood and shifts its predictive distribution, producing a large likelihood
drop and divergence for the chunk that carried the support. A hallucinated sentence does not draw on any passage, so
removing any chunk leaves its likelihood and predictive distribution almost unchanged, producing small values across all
removals. The detector therefore assigns high grounding sensitivity to the grounded sentence and flags the hallucinated
one, and it returns the supporting passage as the explanation for the grounded sentence. This behavior is the
qualitative counterpart of Proposition~\ref{prop:free}, in which a grounded sentence has nonzero sensitivity to a
specific unit while a free-running sentence is insensitive to all units.

\subsection{Algorithm and complexity}\label{sub:m-algo}
Algorithm~\ref{alg:gs} summarizes the procedure, and Table~\ref{tab:cost} lists the cost of each condition. For a
response with answer length $A$, vocabulary $V$, and $K$ chunks, the method performs $K+2$ scoring passes over the
fixed answer, with divergence terms costing $O(AV)$ per condition, and no generation or training for feature
construction. Several choices warrant justification. We chunk the context rather than removing
single sentences, which bounds the number of passes at $K+2$ at the price of coarser localization. The scoring model
need not be the generator, since the method requires only a probabilistic model of the answer under a context, so a
small separate scorer can audit answers from a larger or hosted generator provided it follows the context well enough
to react to its removal.

The maximum over chunks is one of several reasonable aggregations, and the choice reflects an assumption about how
support is distributed. A maximum suits single-passage support and is robust to the many irrelevant removals that
leave a span unchanged, a mean would measure diffuse reliance but would be dominated by the irrelevant removals, and a
top-few sum sits between the two when support is split across a small number of passages. The no-context condition is
the limiting case of removing all units at once, so it is the natural partner of the single-unit maximum, and the two
together bracket the range from concentrated to diffuse support. We keep the feature set deliberately small, four
sensitivity features plus the baselines, both to limit the risk of overfitting on a few hundred responses and to keep
the detector interpretable, since each feature has a direct reading in terms of how the span responds to removing the
whole context or the single most important chunk. A further refinement retains
the sign of the likelihood change rather than its magnitude, since a passage that a span contradicts and a passage
that a span relies on both change the likelihood when removed, but in directions that may differ. The AUC is
threshold-free, but a deployment must choose an operating point, and because the divergence part of the signal is
bounded and comparable across spans by Proposition~\ref{prop:bound}, a single threshold transfers reasonably, with a
low threshold favoring recall for high-stakes review and a high threshold favoring precision for automated gating.

\begin{algorithm}[t]
\SetAlgoLined
\KwIn{answer $y$, query $q$, context $c$ split into chunks $s_1,\dots,s_K$, model $M$}
\KwOut{per-span grounding-sensitivity features and supporting unit}
$\ell^{c}, P^{c} \leftarrow$ score $y$ with $M$ under $(q,c)$\;
$\ell^{\varnothing}, P^{\varnothing} \leftarrow$ score $y$ with $M$ under $(q,\varnothing)$\;
\For{$k \leftarrow 1$ \KwTo $K$}{
  $\ell^{(k)}, P^{(k)} \leftarrow$ score $y$ with $M$ under $(q, c\setminus s_k)$\;
}
compute per-token gap, $\textsf{jsd}_{\varnothing}$, leave-one-out drop, $\textsf{jsd}_{\mathrm{loo}}$ from Eqs.~\eqref{eq:gap}--\eqref{eq:jsdloo}\;
\ForEach{sentence span $y'$ in $y$}{
  aggregate per-token features over the tokens of $y'$\;
  $k^{*} \leftarrow \arg\max_k$ likelihood drop on $y'$\;
  emit features of $y'$ and supporting unit $s_{k^{*}}$\;
}
\caption{Grounding-sensitivity scoring of a RAG answer.}
\label{alg:gs}
\end{algorithm}

\begin{table}[t]
\centering
\caption{Computational cost per response, with answer length $A$, vocabulary $V$, and $K$ context chunks.}
\label{tab:cost}
\small
\begin{tabular}{lcc}
\toprule
Condition & Forward passes & Divergence cost \\
\midrule
Full context & 1 & n/a \\
\rowsep
No context & 1 & $O(AV)$ \\
\rowsep
Leave-one-out & $K$ & $O(KAV)$ \\
\midrule
Total & $K+2$ & $O(KAV)$ \\
\bottomrule
\end{tabular}
\end{table}

\section{Experimental Setup}\label{sec:setup}
\subsection{Datasets}\label{sub:s-data}
We evaluate on RAGTruth~\cite{niu2024ragtruth}, a benchmark of retrieval-augmented answers with human span-level
hallucination annotations and a type label for each annotated span. We use the two task types whose answers are long
enough for stable estimation, summarization and data-to-text, and we sample a class-balanced set of $200$ hallucinated
and $200$ clean responses. The same $400$ responses, with identical text and gold span labels, are scored by all three models. Only the scoring model changes, so the comparison across scorers is on one fixed evaluation set. The generators
that produced these answers in RAGTruth (for example GPT-4) are distinct from our scorers, which is why we describe the
scorer as a probabilistic model of the answer rather than the generator. Sentence segmentation is text-based, splitting
the answer at sentence-final punctuation, so it is identical across scorers. The small difference in sentence counts,
$2{,}586$ for the Qwen tokenizer and $2{,}550$ for SmolLM2, comes only from the $200$-token answer cap landing at a
slightly different text position under each tokenizer.

We use three benchmarks to answer three different questions, so their roles and their depth of treatment differ by
design rather than by convenience. RAGTruth is the primary benchmark and carries the full study, because it is the only
one of the three with human span-level hallucination annotations together with a hallucination-type label, which is
what supports the per-type, per-task, baseline, ablation, attribution, and attractor analyses. On it we ask not only
whether the signal detects hallucination but how it behaves. TofuEval then asks whether the signal reaches beyond
RAGTruth to a different domain at the same span-level granularity, which requires a second annotated benchmark but not a
repeat of every analysis. RAGBench asks whether it reaches a different task type, short-answer question answering, where
the answer need not be constructed from the retrieved context. The three are therefore not interchangeable. RAGTruth
establishes the method, TofuEval tests its reach across domain, and RAGBench probes its reach across task type, and each
is used only to the depth its question requires.

TofuEval, in its MeetingBank portion~\cite{tang2024tofueval,hu2023meetingbank}, provides sentence-level
factual-consistency labels for topic-focused summaries of meeting transcripts, a different domain and annotation scheme
from RAGTruth. We map each summary into the same format, the transcript as context, the topic as query, and each summary
sentence as an answer span with a binary label, raise the context cap to $1500$ tokens so every transcript fits, and
score all three models over $884$ summaries and $2{,}401$ sentences. RAGBench~\cite{friel2024ragbench} is a multi-domain
benchmark of retrieval-augmented question answering. We pool the test splits of six domains spanning biomedical,
financial, technical, and open-domain QA into a class-balanced set of $797$ responses and $3{,}858$ sentences with
genuine per-sentence unsupported-sentence labels. Because its role is to probe the reach of the method rather than to
re-establish it, the two Qwen scorers are sufficient and we report a single span-level table. Table~\ref{tab:data}
summarizes the composition of the three benchmarks, and Sections~\ref{sub:r-tofueval} and~\ref{sub:r-ragbench} report
the TofuEval and RAGBench results.

\begin{table}[t]
\centering
\caption{Composition of the three evaluation benchmarks. RAGTruth covers summarization and data-to-text, TofuEval
meeting summarization, and RAGBench short-answer question answering. RAGTruth and RAGBench are class-balanced by
sampling, while TofuEval takes the natural distribution over its summaries. Sentence counts use the Qwen tokenizer.}
\label{tab:data}
\small
\begin{tabular}{lrrr}
\toprule
Quantity & RAGTruth & TofuEval & RAGBench \\
\midrule
Responses          & 400 & 884 & 797 \\
\quad hallucinated & 200 & 309 & 400 \\
\quad clean        & 200 & 575 & 397 \\
\midrule
Sentences          & 2{,}586 & 2{,}401 & 3{,}858 \\
\quad hallucinated & 313 & 440 & 778 \\
\quad grounded     & 2{,}273 & 1{,}961 & 3{,}080 \\
\midrule
Scorers            & 3 & 3 & 2 \\
\bottomrule
\end{tabular}
\end{table}

\subsection{Models}\label{sub:s-models}
We score answers with three instruction-tuned models. Two, Qwen2.5-0.5B-Instruct and
Qwen2.5-1.5B-Instruct~\cite{qwen2025}, come from one family and differ in scale, which tests robustness to model size,
and the third, SmolLM2-1.7B-Instruct~\cite{allal2025smollm2}, comes from a different family, which tests robustness
across lineages. All three run on a single consumer graphics processing unit (GPU) with $6$ gigabytes of memory, and scoring
uses forward passes only and no generation.

\subsection{Baselines}\label{sub:s-base}
We compare grounding sensitivity against perplexity, the mean answer surprisal under the full context, and length, the
number of answer tokens. For entailment we report the \texttt{cross-encoder/nli-deberta-v3-small} and
\texttt{cross-encoder/nli-deberta-v3-large} models,\footnote{Hugging Face:
\url{https://huggingface.co/cross-encoder/nli-deberta-v3-small} and
\url{https://huggingface.co/cross-encoder/nli-deberta-v3-large}.} scoring the probability that a premise entails a
sentence hypothesis, so that the comparison is not against a deliberately weak verifier. Because a long context truncated to the cross-encoder limit of
$512$ tokens can cut the supporting evidence, we report the entailment two ways, once with the whole context as premise
and once as the maximum entailment over the same $K=5$ chunks used by the detector, which lets the verifier attend to
the single most relevant passage. We further compare against a SelfCheckGPT-style self-consistency
baseline~\cite{manakul2023selfcheckgpt} that draws $N=4$ stochastic regenerations of the answer from the same scorer
under the same context, with temperature $1.0$ and nucleus sampling at $p=0.95$, and scores each answer sentence by its
average contradiction probability against the four samples under the compact NLI model, which is the sentence-level
NLI variant of SelfCheckGPT. We also report a combined feature set that adds perplexity and length to the
grounding-sensitivity features.

The baselines above represent the detection families that can be run under one unified leakage-clean protocol, namely
intrinsic uncertainty through perplexity, entailment verification through the NLI cross-encoders, and sampling
agreement through self-consistency. Two closely related recent lines, the mechanistic detectors and the supervised RAG
judges, are cited but not evaluated head to head, because they operate in a different regime from the black-box,
training-free setting studied here. The mechanistic detectors that separate context use from parametric knowledge,
such as ReDeEP and LUMINA~\cite{sun2025redeep,yeh2026lumina}, read the generator's internal states and therefore
assume white-box access to the model that produced the answer, whereas grounding sensitivity is computed from output
likelihoods alone, so a faithful re-implementation would require their specific generators and activations rather than
the scoring interface used here. The supervised detector LettuceDetect~\cite{kovacs2025lettucedetect} trains a
dedicated judging model on RAGTruth, so its released weights have already seen the corpus we evaluate on, and a fair
inclusion under the grouped folds would require retraining it per fold rather than reusing a model fit on data that
overlaps the test partitions. It also represents the trained-judge paradigm that the training-free variant is meant to
offer an alternative to. We therefore position the present work against these methods conceptually in
Section~\ref{sec:related}, and we do not place their reported numbers in the same tables, because those numbers come
from different generators, splits, and metrics and are not directly comparable under our protocol.

\subsection{Protocol and metrics}\label{sub:s-proto}
We report AUC under five-fold cross-validation with a gradient-boosted decision tree classifier. For the response
level we use stratified folds, and for the span level we use stratified folds grouped by response, so that sentences
from one answer never appear in both the training and the test partitions. For the per-type analysis we evaluate clean
sentences against sentences of a single type at a time under the same grouped protocol. Confidence intervals are $95\%$
percentile bootstrap intervals over $1000$ resamples, drawn by response at the span level so the grouping is preserved,
and significance against perplexity uses a paired bootstrap on the same resamples. The point estimate in every table is
the mean AUC over these resamples rather than the single value computed once on the out-of-fold predictions, a standard
bootstrap summary that can differ from the latter by around a thousandth. Both the five-fold split and the number of
bootstrap resamples are conventional choices for this kind of evaluation.

\subsection{Implementation}\label{sub:s-impl}
Context is truncated to $700$ tokens and the answer to $200$ tokens, the context is split into $K=5$ chunks by
sentence grouping for the leave-one-out perturbations, and per-token divergences are computed over the full
vocabulary. Scoring an answer under a condition concatenates the formatted prompt for that condition with the answer
tokens, runs one forward pass, and reads the per-token log-probabilities and predictive distributions at the answer
positions. The classifier uses fixed conventional settings, $300$ trees with a learning rate of $0.05$ and $31$ leaves, which
were not tuned on the evaluation data, since the recommended detector is the training-free threshold and the
classifier is reported only for comparison. All experiments use fixed random seeds. The same
answer tokenization is reused across all conditions, so the per-token quantities align position by position and can be
compared directly, and no fine-tuning or gradient computation is involved. Sentence boundaries and the token-to-sentence assignment use
the character offsets from the tokenizer, so the span aggregation is exact rather than approximate, and fragments
shorter than a few tokens are skipped because their feature estimates are unstable.

Table~\ref{tab:hyper} lists the full configuration used for every reported result, and Table~\ref{tab:featdef}
restates the four grounding-sensitivity features and the three baselines.

\begin{table}[t]
\centering
\caption{Configuration for all experiments. The pipeline runs in two stages, GPU scoring with PyTorch, transformers,
and Hugging Face datasets, then classifier training and metric computation with scikit-learn and LightGBM.}
\label{tab:hyper}
\small
\begin{tabular}{ll}
\toprule
Component & Setting \\
\midrule
Scorer models & Qwen2.5-0.5B / 1.5B-Instruct, \\
              & SmolLM2-1.7B-Instruct \\
\rowsep
Entailment models & deberta-v3-small / large NLI \\
\rowsep
Self-consistency samples & $N=4$ \\
\rowsep
Max context tokens & 700 \\
\rowsep
Max answer tokens & 200 \\
\rowsep
Context chunks $K$ & 5 (sensitivity: 3, 5, 10) \\
\rowsep
Classifier & gradient-boosted trees \\
\quad trees & 300 \\
\quad learning rate & 0.05 \\
\quad leaves & 31 \\
\rowsep
Cross-validation & 5-fold \\
\quad span level & grouped by response \\
\rowsep
Bootstrap resamples & 1000, grouped \\
\rowsep
Random seed & fixed \\
\midrule
GPU & NVIDIA RTX 3060 Laptop, 6\,GB \\
\rowsep
CPU & Intel Core i7-11800H \\
\rowsep
System RAM & 32\,GB \\
\rowsep
Operating system & Windows 11 \\
\midrule
Python & 3.14 \\
\rowsep
PyTorch & 2.6.0 (CUDA 12.4) \\
\rowsep
transformers & 5.3.0 \\
\rowsep
Hugging Face datasets & 4.7.0 \\
\rowsep
scikit-learn & 1.8.0 \\
\rowsep
LightGBM & 4.6.0 \\
\bottomrule
\end{tabular}
\end{table}

\begin{table}[t]
\centering
\caption{Summary of features and baselines.}
\label{tab:featdef}
\small
\begin{tabular}{ll}
\toprule
Name & Definition \\
\midrule
$\textsf{gap}$ & full vs no-context log-likelihood, Eq.~\eqref{eq:gap} \\
\rowsep
$\textsf{jsd}_{\varnothing}$ & full vs no-context JSD, Eq.~\eqref{eq:jsdn} \\
\rowsep
$\textsf{drop}$ & max leave-one-out drop, Eq.~\eqref{eq:drop} \\
\rowsep
$\textsf{jsd}_{\mathrm{loo}}$ & max leave-one-out JSD, Eq.~\eqref{eq:jsdloo} \\
\midrule
perplexity & Eq.~\eqref{eq:ppl} under full context \\
\rowsep
length & number of answer tokens \\
\rowsep
NLI entailment & entailment probability, context vs answer \\
\bottomrule
\end{tabular}
\end{table}

\subsection{Reproducibility}\label{sub:s-repro}
The only model calls are the $K{+}2$ scoring passes, and everything after is arithmetic over the stored per-token
quantities, so the per-answer latency is the time of $K{+}2$ forward passes of a small model over a few hundred
tokens, modest relative to generating the answer, and passes for different answers are independent, so throughput
scales with parallelism. The divergence terms are computed over the full vocabulary, which is the most
memory-intensive part. When memory is constrained, they can be computed in vocabulary blocks or restricted to the
top-probability tokens with little loss, which does not affect the likelihood-based features. Because the answer is
never regenerated, the procedure is deterministic given the model and the inputs and does not depend on a sampling
temperature. The sampling of responses is deterministic given the seed and every step is seeded, so a re-run
reproduces the numbers exactly on the same hardware, with differences on other hardware limited to the last digits
from floating-point summation order. To reproduce a reported table, one runs the scoring over the sampled responses
to produce the per-token features, aggregates them to the span and response levels, and runs the grouped
cross-validation with the listed classifier settings, and the figures are produced from the same feature files by
the plotting scripts.

\section{Results}\label{sec:results}
\subsection{Response-level detection}\label{sub:r-resp}
Table~\ref{tab:resp} reports response-level AUC with $95\%$ bootstrap confidence intervals for three scorers spanning
two model families. The training-free default, GASP-threshold, reaches $0.745$, $0.713$, and $0.741$ for Qwen2.5-0.5B,
Qwen2.5-1.5B, and SmolLM2-1.7B, and the trained classifier reaches $0.706$, $0.726$, and $0.716$. Both are well above
perplexity, which attains $0.581$, $0.624$, and $0.578$, and far above the length baseline near $0.55$. The trained
classifier's gap over perplexity is $+0.10$ to $+0.14$ and is significant under a paired bootstrap on all three scorers,
with $p<0.001$, $p=0.002$, and $p=0.001$. Adding the baselines to the trained classifier changes the score only
marginally, so grounding sensitivity already carries the signal at the response level. Figure~\ref{fig:respauc}
visualizes the comparison.

\begin{table*}[t]
\centering
\caption{Response-level hallucination detection AUC with $95\%$ bootstrap confidence intervals over $1000$ resamples,
three scorers. GASP-threshold (default) is the training-free variant. Best mean per column in bold.}
\label{tab:resp}
\small
\begin{tabular}{lccc}
\toprule
Feature set & Qwen2.5-0.5B & Qwen2.5-1.5B & SmolLM2-1.7B \\
\midrule
Perplexity            & 0.581 [0.524, 0.638] & 0.624 [0.572, 0.675] & 0.578 [0.522, 0.630] \\
\rowsep
Length                & 0.553 [0.496, 0.611] & 0.553 [0.497, 0.609] & 0.539 [0.483, 0.593] \\
\midrule
GASP-threshold (default) & \textbf{0.745} [0.692, 0.793] & 0.713 [0.661, 0.765] & \textbf{0.741} [0.691, 0.788] \\
\rowsep
GASP-trained          & 0.706 [0.654, 0.757] & \textbf{0.726} [0.676, 0.776] & 0.716 [0.667, 0.766] \\
\rowsep
GASP+base (trained)   & 0.731 [0.679, 0.779] & 0.713 [0.658, 0.760] & 0.717 [0.668, 0.765] \\
\bottomrule
\end{tabular}
\end{table*}

\subsection{Span-level detection}\label{sub:r-span}
Table~\ref{tab:sent} reports span-level AUC under the leakage-clean, response-grouped protocol, again with bootstrap
confidence intervals. The default GASP-threshold reaches $0.672$, $0.673$, and $0.681$ across the three scorers and the
trained classifier reaches $0.635$, $0.645$, and $0.657$, above perplexity ($0.536$, $0.565$, $0.615$) and the length baseline near $0.55$. The grounding gap over perplexity is significant on all three scorers under a paired bootstrap
($p<0.001$, $p<0.001$, and $p=0.022$). The combined trained set improves the span-level score to between $0.676$ and
$0.702$, and a well-configured chunk-level NLI verifier reaches $0.677$ (Section~\ref{sub:r-strongbase}), so grounding
sensitivity and a strong entailment verifier are close at this granularity. Figure~\ref{fig:sentauc} shows the same comparison, and Figure~\ref{fig:roc} shows the
span-level ROC curves for the three scorers, where the grounding curve stays above perplexity and whole-context NLI
entailment across most of the operating range.

\begin{table*}[t]
\centering
\caption{Span-level (sentence) AUC under the leakage-clean protocol with folds grouped by response, with $95\%$
bootstrap confidence intervals. GASP-threshold (default) is the training-free variant. The lower block separates the
two hallucination types (single type versus clean). Best mean per column in bold.}
\label{tab:sent}
\small
\begin{tabular}{lccc}
\toprule
Feature set & Qwen2.5-0.5B & Qwen2.5-1.5B & SmolLM2-1.7B \\
\midrule
Perplexity            & 0.536 [0.506, 0.568] & 0.565 [0.532, 0.599] & 0.615 [0.584, 0.646] \\
\rowsep
Length                & 0.547 [0.510, 0.583] & 0.548 [0.512, 0.582] & 0.523 [0.489, 0.558] \\
\midrule
GASP-threshold (default) & 0.672 [0.641, 0.701] & 0.673 [0.645, 0.700] & 0.681 [0.655, 0.708] \\
\rowsep
GASP-trained          & 0.635 [0.608, 0.664] & 0.645 [0.615, 0.675] & 0.657 [0.624, 0.687] \\
\rowsep
GASP+base (trained)   & \textbf{0.676} [0.647, 0.703] & \textbf{0.678} [0.648, 0.707] & \textbf{0.702} [0.673, 0.731] \\
\midrule
\quad baseless vs clean & 0.620 & 0.638 & 0.641 \\
\quad conflict vs clean & 0.685 & 0.704 & 0.681 \\
\bottomrule
\end{tabular}
\end{table*}

\begin{figure}[t]
\centering
\includegraphics[width=\linewidth]{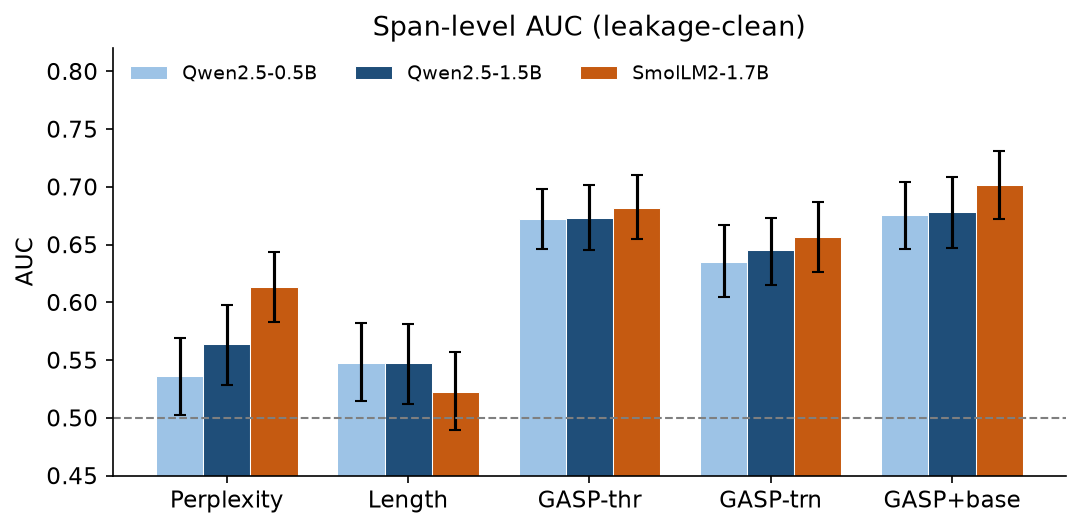}
\caption{Span-level AUC by feature set for the three scorers, with bootstrap $95\%$ confidence intervals as error
bars. GASP-thr is the training-free threshold-only default, GASP-trn the trained classifier, and GASP+base the trained combined set. All sit well above perplexity, length, and the chance line (dashed).}
\label{fig:sentauc}
\end{figure}

\begin{figure}[t]
\centering
\includegraphics[width=\linewidth]{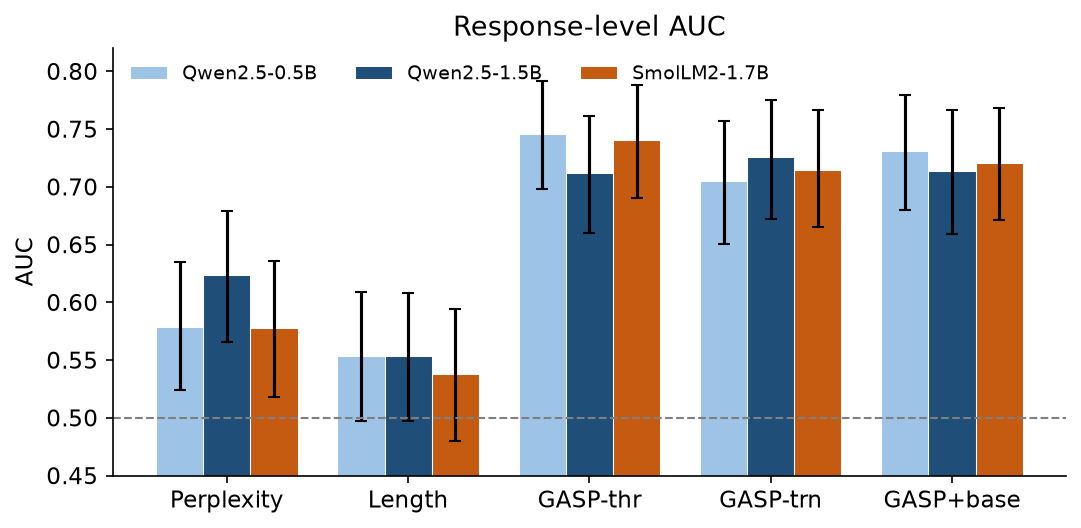}
\caption{Response-level AUC by feature set for the three scorers, with bootstrap $95\%$ confidence intervals as error
bars. GASP-thr is the training-free threshold-only default, GASP-trn the trained classifier, GASP+base the trained
combined set. Dashed line: chance.}
\label{fig:respauc}
\end{figure}

\subsection{Robustness across scorers}\label{sub:r-robust}
The grounding-sensitivity AUC is stable across the three tested scorers at both granularities, and the three fall
within each other's confidence intervals, which suggests the signal reflects the grounding relationship rather than a
particular model. The three span two families, since SmolLM2-1.7B is from a different lineage than the two Qwen models,
so the result is not purely within-family, though three scorers remain a small sample and families such as Llama,
Mistral, and Gemma, and larger scorers, would be needed to claim broad invariance. The margin over perplexity is not
equally large on every scorer. On SmolLM2 the perplexity baseline is stronger at the span level, $0.615$, and the
grounding gap is correspondingly smaller, $0.657$ against $0.615$ and still significant at $p=0.022$, whereas on the
Qwen scorers perplexity is weaker and the gap is wider, so the advantage is consistent in sign but not identical in size across scorers. Among the other baselines, the length baseline is far below grounding sensitivity for
every scorer, so the detector is not tracking span length, and the whole-context entailment baseline stays below
grounding sensitivity, though a chunk-level entailment verifier is competitive (Section~\ref{sub:r-strongbase}). This robustness has
a specific scope. It indicates that across the scorers tested, which vary in size and family, the grounding signal is
essentially unchanged, but not that an arbitrarily weak scorer would suffice, because a scorer that fails to attend to the
context would react little to its removal even for grounded spans, which would compress the signal toward chance. The
three scorers are all competent at following the context, so the property the method needs is competence at using the
context rather than raw scale, and that property is already present in small instruction-tuned models.

\subsection{Per-type analysis}\label{sub:r-type}
The lower block of Table~\ref{tab:sent} separates the two hallucination types. Grounding sensitivity detects
evident-conflict sentences slightly better than baseless-information sentences on all three scorers, with conflict at
$0.685$, $0.704$, and $0.681$ against baseless at $0.620$, $0.638$, and $0.641$. Both types are detected well above
chance, the ordering is consistent across scorers, and the gap is modest, so grounding sensitivity detects both unsupported additions and contradictions. Detecting both matters, since a detector that only caught baseless additions would miss
the contradictions that are often the more dangerous errors in a high-stakes setting, and a detector that only caught
contradictions would miss the unsupported additions that are more common. The consistency of the ordering across the three scorers also makes it less likely
that the per-type difference is a sampling artifact, and it motivates the direction-aware refinement discussed later.

\subsection{Detection by task type}\label{sub:r-task}
RAGTruth mixes two task types, and a single-benchmark result is stronger if the signal is not tuned to one of them.
Table~\ref{tab:task} splits the trained grounding detector by task under the same leakage-clean protocol. The signal is
present on both, with data-to-text detected more consistently, response AUC $0.62$ to $0.67$ and span AUC $0.58$ to
$0.66$, than summarization, whose values are more variable and, for the $1.5$B scorer at the span level, fall to $0.535$
with a confidence interval that includes chance. Summarization also contributes far fewer hallucinated spans to the
sample, which widens its intervals. The split shows that the method generalizes across task type but is not uniformly
strong, and that data-to-text, where an answer restates structured fields, exposes the dependence on evidence more
cleanly than free-form summarization.

\begin{table}[t]
\centering
\caption{Grounding detection AUC by RAGTruth task type, leakage-clean protocol, using the trained detector
(GASP-trained) as a consistent reference across tasks. These are point estimates with wide confidence intervals, since
each task contributes roughly $200$ responses.}
\label{tab:task}
\small
\begin{tabular}{llccc}
\toprule
Level & Task & Qwen-0.5B & Qwen-1.5B & SmolLM2 \\
\midrule
Response & Summary  & 0.554 & 0.669 & 0.636 \\
\rowsep
Response & Data2txt & 0.661 & 0.619 & 0.669 \\
\rowsep
Span     & Summary  & 0.563 & 0.535 & 0.646 \\
\rowsep
Span     & Data2txt & 0.577 & 0.629 & 0.660 \\
\bottomrule
\end{tabular}
\end{table}

\subsection{TofuEval}\label{sub:r-tofueval}
Table~\ref{tab:tofueval} reports the TofuEval result over $884$ summaries and $2{,}401$ sentences, scored with the
identical pipeline and features under the leakage-clean protocol. The benchmark and its mapping to our format are
described in Section~\ref{sub:s-data}. At the span level
GASP-threshold reaches $0.693$ on all three scorers, computed independently per scorer and coinciding only after
rounding to three decimals, close to and slightly above its RAGTruth values, and the trained
and combined variants sit between $0.63$ and $0.68$, all significantly above perplexity, which stays at $0.52$ to $0.56$,
with the grounding gap over perplexity significant on every scorer ($p<0.001$, except $p=0.005$ at the response level for
the $1.5$B scorer). Length is near chance. The training-free threshold-only default again matches or exceeds the trained
classifier. Response-level detection is a little lower, $0.61$ to
$0.66$, and remains significantly above perplexity. Taken together, two benchmarks in two domains, with consistent behavior and the same
recommended default detector, is stronger evidence of external validity than either alone, though two datasets remain a
limited sample.

\begin{table*}[t]
\centering
\caption{TofuEval-MeetingBank, sentence-level factual consistency for topic-focused meeting summaries, under the
leakage-clean protocol with $95\%$ bootstrap confidence intervals. GASP-threshold is the training-free default. Best
mean per column in bold.}
\label{tab:tofueval}
\small
\begin{tabular}{lccc}
\toprule
Feature set & Qwen-0.5B & Qwen-1.5B & SmolLM2 \\
\midrule
\multicolumn{4}{l}{\emph{Response level}} \\
Perplexity               & 0.521 [0.481, 0.562] & 0.548 [0.509, 0.586] & 0.516 [0.479, 0.552] \\
\rowsep
GASP-threshold (default) & 0.625 [0.584, 0.663] & 0.623 [0.581, 0.663] & 0.614 [0.575, 0.653] \\
\rowsep
GASP-trained             & 0.632 [0.594, 0.669] & 0.627 [0.590, 0.665] & 0.641 [0.605, 0.680] \\
\rowsep
GASP+base (trained)      & \textbf{0.659} [0.620, 0.697] & \textbf{0.633} [0.594, 0.670] & \textbf{0.656} [0.614, 0.695] \\
\midrule
\multicolumn{4}{l}{\emph{Span level}} \\
Perplexity               & 0.555 [0.525, 0.587] & 0.516 [0.487, 0.545] & 0.524 [0.497, 0.553] \\
\rowsep
Length                   & 0.515 [0.485, 0.544] & 0.515 [0.484, 0.545] & 0.535 [0.503, 0.565] \\
\rowsep
GASP-threshold (default) & \textbf{0.693} [0.659, 0.726] & \textbf{0.693} [0.661, 0.726] & \textbf{0.693} [0.658, 0.726] \\
\rowsep
GASP-trained             & 0.656 [0.620, 0.691] & 0.642 [0.607, 0.676] & 0.631 [0.593, 0.664] \\
\rowsep
GASP+base (trained)      & 0.672 [0.636, 0.705] & 0.675 [0.642, 0.709] & 0.680 [0.646, 0.714] \\
\bottomrule
\end{tabular}
\end{table*}

\subsection{RAGBench}\label{sub:r-ragbench}
RAGBench probes the complementary question of where the signal does \emph{not} hold, on multi-domain short-answer QA. The benchmark and its pooling across six domains are described in Section~\ref{sub:s-data}. At the span level
(Table~\ref{tab:ragbench}), unlike the summarization benchmarks, grounding sensitivity is comparable to perplexity
rather than better, with the trained detector at $0.60$ and the threshold-only variant at $0.57$ against perplexity at $0.61$ to
$0.62$, and the grounding gap over perplexity is not significant ($p>0.5$). The combined set reaches $0.67$ to $0.68$,
above either alone, so the grounding features still add complementary information, but on this benchmark perplexity carries most of the signal. On the labeling side, RAGBench annotates each example with a set of unsupported response-sentence keys, so the
span-level labels are genuine sentence-level labels obtained the same way as for RAGTruth and TofuEval, and the
span-level AUC is a like-for-like comparison across the three benchmarks. We report the span level rather than the
response level because the response-level adherence label is coarse, marking a whole multi-sentence response
unsupported if any single sentence is, which is poorly matched to features averaged over an otherwise grounded response.

This is a scope boundary rather than a failure of the method. The direction of the signal is unchanged, since grounded
sentences remain more sensitive to context removal than hallucinated ones, but its magnitude is small on short-answer,
knowledge-intensive QA. A plausible reason is that when a model can answer a factual question from its parametric
knowledge, a grounded answer is not strongly sensitive to removing the retrieved passage, because the model would
produce it anyway, and perplexity is already discriminative in that regime. Grounding sensitivity is therefore best
suited to settings where the answer must be constructed from the provided context, such as summarization and
data-to-text, where RAGTruth and TofuEval show the clear gains, and it adds little over perplexity on short-answer
retrieval QA.

\begin{table*}[t]
\centering
\caption{Scope probe on RAGBench, a multi-domain short-answer QA benchmark ($797$ responses, $3{,}858$ sentences, two
Qwen scorers). Span-level AUC with $95\%$ bootstrap confidence intervals. Unlike the summarization benchmarks, grounding
sensitivity is comparable to, not better than, perplexity here (grounding versus perplexity $p>0.5$).}
\label{tab:ragbench}
\small
\begin{tabular}{lcc}
\toprule
Method & Qwen-0.5B & Qwen-1.5B \\
\midrule
Perplexity        & 0.613 [0.590, 0.634] & 0.619 [0.599, 0.642] \\
\rowsep
GASP-threshold    & 0.576 [0.540, 0.610] & 0.573 [0.540, 0.607] \\
\rowsep
GASP-trained      & 0.595 [0.572, 0.618] & 0.596 [0.573, 0.618] \\
\rowsep
GASP+base (trained) & \textbf{0.668} [0.646, 0.690] & \textbf{0.679} [0.659, 0.700] \\
\bottomrule
\end{tabular}
\end{table*}

\section{Analysis and Diagnostics}\label{sec:analysis}
Having established the detection results, this section examines the detector in more detail.

\subsection{Feature analysis}\label{sub:r-imp}
Figure~\ref{fig:dist} shows that the grounding sensitivity of grounded sentences is shifted toward larger
context-removal effects than that of hallucinated sentences, with overlap in the middle that is consistent with the
moderate absolute AUC. Figure~\ref{fig:imp} shows that no single feature dominates the combined classifier on any
scorer, since the four grounding features carry comparable gain and length is the weakest throughout. The relative
roles of the two views are clearer in the ablation of Section~\ref{sub:r-ablation}, where the no-context features are
the strongest single group and the leave-one-out features add a smaller complementary signal at the span level. This
is consistent with the design intuition that a grounded sentence usually leans on one passage, which the leave-one-out
contrast localizes, while total reliance on the context is captured by the no-context contrast. The combined model
being strongest at the span level, where
the single-passage view and the whole-context view are both available, indicates that the two signals capture
different aspects of dependence rather than redundant ones, and it explains why neither alone reaches the combined
performance.

\begin{figure*}[t]
\centering
\includegraphics[width=\linewidth]{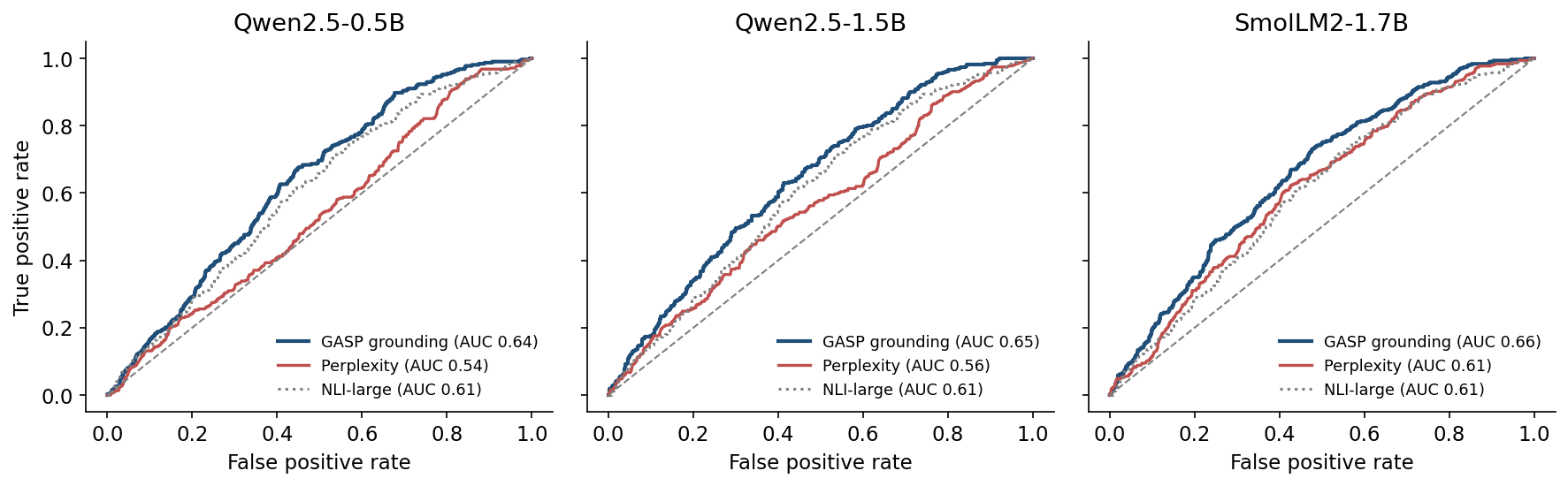}
\caption{Span-level ROC for the three scorers, comparing grounding sensitivity with perplexity and whole-context NLI
entailment. The whole-context NLI baseline does not depend on the scorer, so it is drawn as a shared reference in each
panel. A chunk-level entailment verifier, reported in Table~\ref{tab:baselines}, is competitive with grounding
sensitivity and is not shown here.}
\label{fig:roc}
\end{figure*}

\begin{figure*}[t]
\centering
\includegraphics[width=\linewidth]{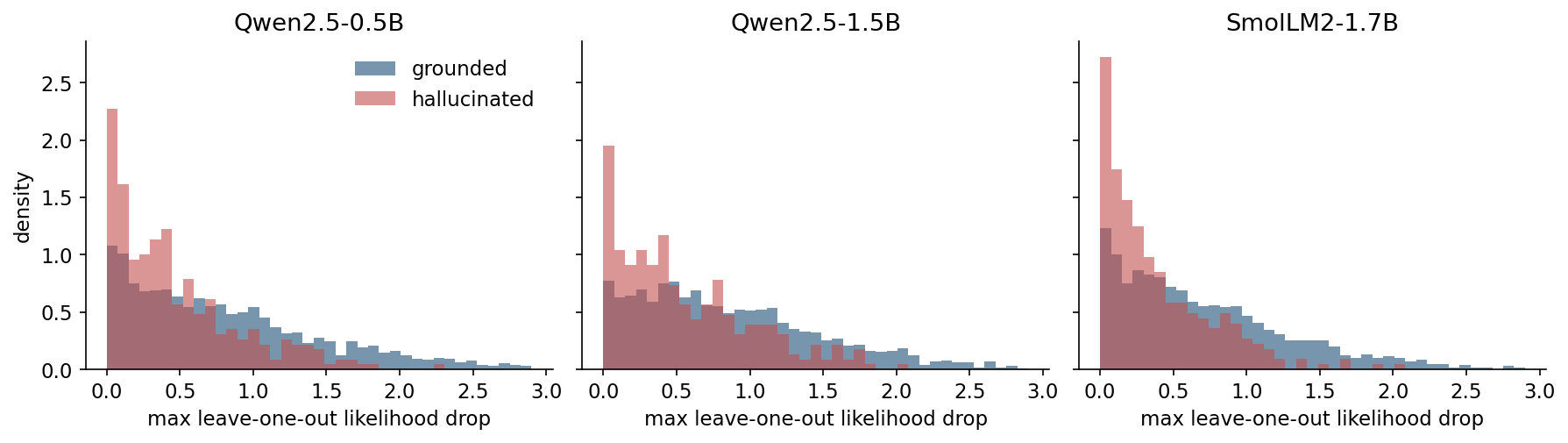}
\caption{Distribution of grounding sensitivity, measured as the maximum log-likelihood drop on context removal, for
grounded versus hallucinated sentences, shown for each of the three scorers.}
\label{fig:dist}
\end{figure*}

\begin{figure*}[t]
\centering
\includegraphics[width=\linewidth]{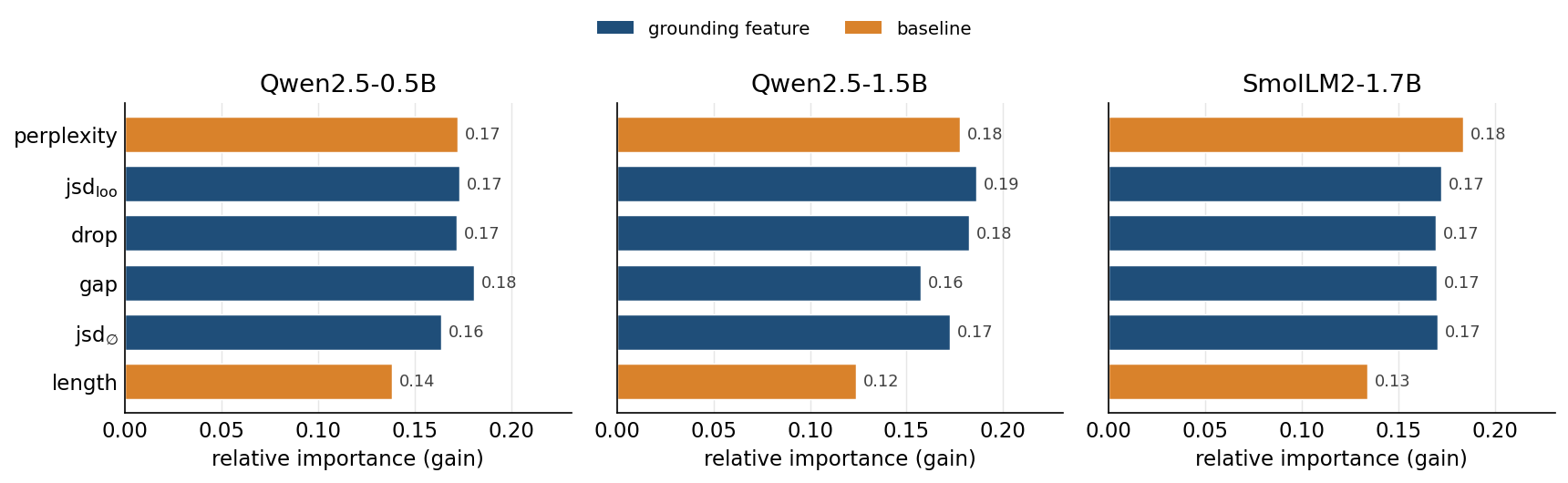}
\caption{Classifier feature importance (gain) at the span level, for the three scorers, with the four grounding
features in navy and the perplexity and length baselines in amber. No single feature dominates. The grounding features
carry comparable gain, with length the weakest on each scorer.}
\label{fig:imp}
\end{figure*}

\subsection{Comparison with baselines}\label{sub:r-strongbase}
Table~\ref{tab:baselines} compares grounding sensitivity at the span level, on the Qwen2.5-1.5B scorer, against the
entailment and self-consistency baselines. The large NLI cross-encoder improves on the small one, from $0.532$ to $0.605$ with the whole context, and to
$0.677$ when it is allowed to attend to the single best of the $K=5$ chunks rather than a premise truncated at $512$
tokens. Configured this way, the entailment verifier essentially matches GASP at the span level, $0.677$ against the
threshold-only $0.673$ and the trained $0.645$, so GASP does not exceed a well-configured chunk-level NLI verifier at
this granularity, and the two lie within each other's confidence intervals. GASP instead requires no separate verifier
model, uses the same scorer that computes the likelihood, extends to the response level, where the entailment baselines
are undefined, and produces the attribution evaluated in Section~\ref{sub:r-explain}. By contrast, the self-consistency baseline reaches $0.486$, near chance in this grounded-summarization setting, at a cost of $N$ full generations per answer rather
than $K{+}2$ scoring passes over a fixed answer.

The two GASP variants are also compared. The training-free threshold-only variant, a negated standardized sum of the
four features with no learned parameters, reaches $0.673$, slightly above the trained gradient-boosted classifier at
$0.645$, and this ordering is not a protocol artifact. Evaluating the threshold-only variant under the same grouped
five-fold cross-validation, with the per-feature mean and standard deviation fit on the training folds only, gives the
same $0.673~[0.645, 0.700]$ as fitting them on the whole set, so there is no standardization leakage and both estimates
are out-of-fold. On a dataset of a few hundred responses the tree ensemble overfits slightly relative to the
zero-parameter monotone sum, so the simpler detector generalizes at least as well while requiring no labeled data. We
therefore use the threshold-only variant as the default GASP detector and report the trained classifier for comparison. Adding perplexity and length to the trained classifier reaches $0.678$, within the confidence interval of the
threshold-only variant.

\begin{table}[t]
\centering
\caption{Span-level comparison against the entailment and self-consistency baselines, Qwen2.5-1.5B scorer, with $95\%$
bootstrap confidence intervals. Bold marks the recommended default detector, not the maximum, since the chunk-level NLI
verifier and the combined trained set fall within its confidence interval.}
\label{tab:baselines}
\small
\setlength{\tabcolsep}{5pt}
\begin{tabular}{lc}
\toprule
Method & Span AUC \\
\midrule
Perplexity & 0.565 [0.532, 0.599] \\
\rowsep
Length & 0.548 [0.512, 0.582] \\
\rowsep
NLI, small, whole context & 0.532 \\
\rowsep
NLI, large, whole context & 0.605 [0.574, 0.637] \\
\rowsep
NLI, large, max-chunk & 0.677 [0.649, 0.704] \\
\rowsep
SelfCheckGPT-NLI ($N{=}4$) & 0.486 [0.446, 0.525] \\
\midrule
GASP-threshold (default) & \textbf{0.673} [0.645, 0.700] \\
\rowsep
GASP-trained & 0.645 [0.615, 0.675] \\
\rowsep
GASP+base (trained) & 0.678 [0.648, 0.707] \\
\bottomrule
\end{tabular}
\end{table}

\subsection{Ablation and sensitivity}\label{sub:r-ablation}
Table~\ref{tab:ablation} ablates the four features on the Qwen2.5-1.5B scorer at the span level. No single feature
matches the full set, and the two views contribute in different proportions. The no-context and divergence pairs are
the strongest groups, and the leave-one-out divergence is the weakest single feature, yet removing it lowers the
full-set score, so all four carry non-redundant information. At the response level the balance shifts further toward the
whole-context view, where the no-context divergence alone reaches $0.727$ on the $1.5$B scorer, close to the full-set
$0.726$, while the localized leave-one-out view matters more at the span level, where a single passage supports a
sentence. The no-context contrast, a single extra scoring pass, therefore
carries most of the discriminative power, especially at the response level, and the $K$ leave-one-out passes add a
smaller amount of complementary signal at the span level. The $K{+}2$ passes are nonetheless justified because the
leave-one-out passes produce the attribution, the candidate supporting chunk for each sentence, which the no-context
contrast cannot provide, so they buy the explanation rather than most of the detection. A deployment that needs only a
detection score, and not the per-sentence supporting passage, can run the two-pass full-versus-no-context variant at
most of the accuracy and a fraction of the cost.

Table~\ref{tab:chunk} varies the two design choices a practitioner must set, the chunk count $K$ and the
leave-one-out aggregation. The span-level AUC changes by at most $0.03$ across $K\in\{3,5,10\}$ and across the maximum,
mean, and top-two aggregations, and every value overlaps the others within the confidence intervals, so the method is
not sensitive to these choices. We fixed $K=5$ before running any experiment, as a balance of localization and cost,
and report $K\in\{3,10\}$ only as a post-hoc sensitivity check, so no value of $K$ was selected on the test data.
Coarser chunking with $K=3$ is marginally best, consistent with a sentence leaning on a small span of context, but the
difference is within noise.

\begin{table}[t]
\centering
\caption{Feature ablation at the span level, Qwen2.5-1.5B scorer, with $95\%$ bootstrap confidence intervals. Single features (top) and
feature groups (bottom). All four features contribute, and the no-context and divergence views carry most of the
signal.}
\label{tab:ablation}
\small
\begin{tabular}{lc}
\toprule
Feature subset & Span AUC \\
\midrule
gap (no-context likelihood) & 0.620 [0.588, 0.651] \\
\rowsep
jsd$_\varnothing$ (no-context divergence) & 0.616 [0.583, 0.645] \\
\rowsep
drop (leave-one-out likelihood) & 0.600 [0.568, 0.632] \\
\rowsep
jsd$_\mathrm{loo}$ (leave-one-out divergence) & 0.595 [0.563, 0.625] \\
\midrule
no-context only (gap, jsd$_\varnothing$) & 0.638 [0.610, 0.666] \\
\rowsep
leave-one-out only (drop, jsd$_\mathrm{loo}$) & 0.616 [0.582, 0.648] \\
\rowsep
likelihood only (gap, drop) & 0.636 [0.607, 0.664] \\
\rowsep
divergence only (jsd$_\varnothing$, jsd$_\mathrm{loo}$) & 0.640 [0.609, 0.668] \\
\midrule
all four (GASP) & \textbf{0.645} [0.615, 0.675] \\
\bottomrule
\end{tabular}
\end{table}

\begin{table}[t]
\centering
\caption{Sensitivity to the chunk count $K$ and to the leave-one-out aggregation, span level, Qwen2.5-1.5B, with
$95\%$ bootstrap confidence intervals. The signal is stable across both choices.}
\label{tab:chunk}
\small
\begin{tabular}{lc}
\toprule
Setting & Span AUC \\
\midrule
$K=3$ chunks & 0.670 [0.639, 0.699] \\
\rowsep
$K=5$ chunks (default) & 0.645 [0.615, 0.675] \\
\rowsep
$K=10$ chunks & 0.640 [0.611, 0.670] \\
\midrule
aggregation: max (default) & 0.645 [0.615, 0.675] \\
\rowsep
aggregation: mean & 0.625 [0.595, 0.655] \\
\rowsep
aggregation: top-2 sum & 0.638 [0.606, 0.669] \\
\bottomrule
\end{tabular}
\end{table}

\subsection{Effect of truncation}\label{sub:r-trunc}
Because a $6$ gigabyte GPU bounds the context to $700$ tokens and the answer to $200$, we quantify what truncation
removes. About $42\%$ of contexts on the Qwen scorers and $48\%$ on SmolLM2 exceed $700$ tokens and are cut, and about
$25$ to $28\%$ of answers exceed $200$ tokens. The labels are largely preserved, since $90$ to $92\%$ of annotated
hallucination spans start within the kept answer, so most annotated spans remain scorable. Context truncation, however,
can remove the passage that supports a grounded sentence, which would lower that sentence's measured sensitivity and
push it toward the hallucinated side. Truncation therefore likely introduces noise and depresses sensitivity estimates,
which would make the reported numbers conservative, but they are not a strict lower bound, since
cutting the context can also alter the effective grounding contract the scorer sees, and a sentence that is supported in
the full context may become genuinely unsupported relative to the shortened input. A larger context window is the most
direct way to remove this confound.

\subsection{Explanation quality}\label{sub:r-explain}
GASP returns, for each sentence, the chunk whose removal most reduces its likelihood as a candidate supporting passage.
RAGTruth annotates hallucinated spans but not the gold supporting passage for grounded ones, so we cannot compute a
gold hit rate and instead run a proxy on the $2{,}273$ grounded sentences of the Qwen2.5-1.5B scorer, using the large
NLI cross-encoder as an independent notion of support. The attributed chunk entails its sentence with mean probability $0.50$, far above a
random chunk at $0.05$ and the highest lexical-overlap chunk at $0.07$, with paired Wilcoxon $p<10^{-240}$ in both
comparisons, so the attribution tracks entailment rather than surface overlap. It is also close to the best available,
since the single most entailing of the five chunks reaches $0.56$ on average. In $69\%$ of grounded sentences the
drop-attributed chunk is exactly the chunk the NLI model ranks as most entailing, against a $20\%$ chance rate for five
chunks. The best-chunk entailment is also higher for grounded sentences than for hallucinated ones, $0.56$ against
$0.28$ with $p<10^{-23}$, consistent with grounded sentences having a genuine supporting passage and hallucinated ones
not. The Wilcoxon tests above are over the $2{,}273$ grounded sentences, which are not independent, since several come
from one response, so we also aggregate to the $399$ responses that contain a grounded sentence and repeat the test on
per-response means. The pattern holds, with attributed $0.50$ against random $0.05$ and lexical $0.07$, a per-response
agreement of $69\%$, and paired Wilcoxon $p<10^{-62}$ against both controls, so the effect is not an artifact of
within-response dependence. These are proxy results, since the reference is an NLI model rather than a human evidence
alignment, so we frame the returned chunk as a candidate supporting passage rather than a verified citation. A small human study that labels each attribution as correct, partial, topical, or incorrect support would strengthen the claim
further and is left to future work.

\subsection{Qualitative case study}\label{sub:r-case}
Table~\ref{tab:case} traces the method on two spans from the same Data2txt overview task, both scored by
Qwen2.5-1.5B, to show the signal behind the aggregate numbers. The first span is grounded and the second is unsupported by the retrieved context. In the grounded case the model reports that popular dishes include sizzling rice soup and imperial
shrimp, a detail carried by a single customer review that names those exact dishes. Removing that one chunk lowers the
span log-likelihood by $3.55$ nats and moves its predictive distribution by a Jensen-Shannon divergence of $0.37$, the
standardized grounding-sensitivity score is $+13.1$, and the same leave-one-out step that produces the score returns
that review as the supporting chunk. In the unsupported case the model asserts vegetarian, gluten-free, and vegan
options that appear nowhere in the business record or its reviews. The span counts as a hallucination because it is
unsupported by the retrieved evidence rather than false in the world, following the grounding notion set out in
Section~\ref{sub:bg-rag}. Here removing any chunk changes the log-likelihood
by at most $0.03$ nats, the largest divergence is $0.01$, and the score is $-5.9$, so the span is flagged. The two
spans show the behavior the detector depends on, since a grounded span leans on specific evidence and reacts when that
evidence is removed, whereas an unsupported span is produced from the model prior and stays almost fixed as the context
changes.

\begin{table*}[t]
\centering
\caption{Qualitative case study of two spans from the same Data2txt task in RAGTruth, scored by Qwen2.5-1.5B. The
grounded span reacts sharply when its supporting chunk is removed, while the unsupported span is almost invariant to the
context. All values are the actual per-span GASP features, with gaps and drops in nats.}
\label{tab:case}
\footnotesize
\renewcommand{\arraystretch}{1.2}
\begin{tabular}{@{}p{2.6cm}p{5.9cm}p{5.9cm}@{}}
\toprule
 & \textbf{Supported span} & \textbf{Unsupported span} \\
\midrule
Query & ``Write an objective overview about the following local business based only on the provided structured data\ldots{} Don't make up information.'' & The same instruction, applied to a different business. \\
\rowsep
Generated span & ``Some popular dishes include sizzling rice soup and imperial shrimp.'' & ``The menu offers a variety of options\ldots{} such as vegetarian, gluten-free, and vegan choices.'' \\
\rowsep
Gold label & Grounded & Hallucinated (baseless) \\
\rowsep
Most influential chunk & Customer review: ``\ldots{} I personally really love the sizzling rice soup and imperial shrimp.'' & Customer review: ``\ldots{} a really good bison burger\ldots{} cheese pizza\ldots{} French fries were way too salty'', with no mention of dietary options. \\
\rowsep
Gap, all context removed & $+3.52$ & $+0.09$ \\
\rowsep
Max chunk drop, leave-one-out & $+3.55$ & $+0.03$ \\
\rowsep
Max chunk JSD & $0.37$ & $0.01$ \\
\rowsep
Grounding-sensitivity score & $+13.1$ & $-5.9$ \\
\rowsep
GASP outcome & Supported. The review chunk is returned as the evidence, and the span restates it, so removing that chunk collapses the likelihood. & Flagged as a likely hallucination. The span states details not supported by the retrieved evidence, so removing any chunk barely changes the likelihood. \\
\bottomrule
\end{tabular}
\end{table*}

The same two spans make the reason for the dynamical-systems framing concrete. Figure~\ref{fig:ifs} plots the
per-token log-probability of each span with the retrieved context and without it. For the grounded span, the content
words such as ``soup'', ``imperial'', and ``shrimp'' are near-certain with the context and fall to log-probabilities
between about $-4$ and $-8$ once it is removed, so the span is a high-probability continuation only under the context.
In the language of Section~\ref{sub:m-ifs}, the answer sits on the attractor induced by the retrieved evidence, and
removing that evidence moves the attractor out from under it. For the hallucinated span, the two curves lie almost on
top of each other, so the unsupported options are equally probable with or without the context, which places the span on
the model's context-free prior rather than on the context attractor. Grounding sensitivity is the size of this gap
aggregated over the span, which is why we read grounding as movement of a context-conditioned measure rather than as a
surface property of the text.

\begin{figure*}[t]
\centering
\includegraphics[width=\textwidth]{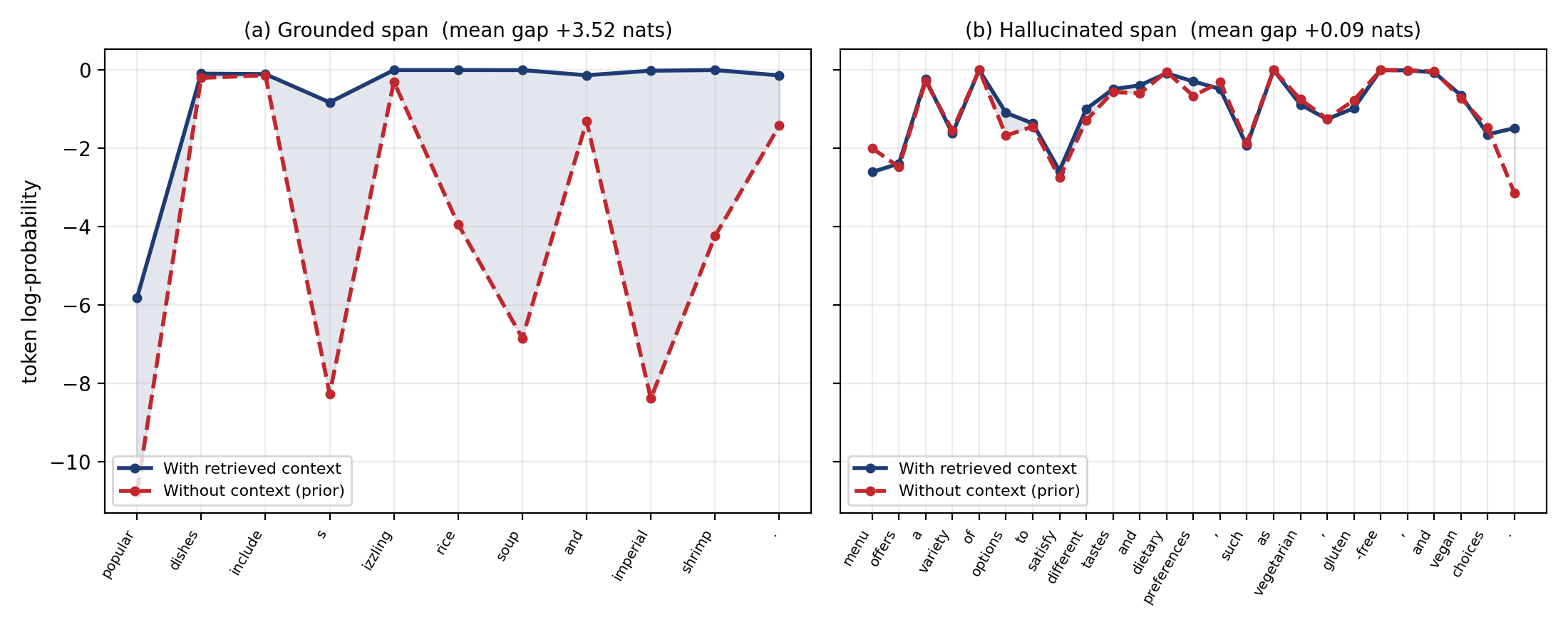}
\caption{Per-token log-probability of the two case-study spans, with and without the retrieved context, under
Qwen2.5-1.5B. (a) The grounded span collapses when the context is removed. (b) The hallucinated span is almost
unchanged. The mean gap between the curves, $+3.52$ against $+0.09$ nats, is the grounding-sensitivity signal.}
\label{fig:ifs}
\end{figure*}

\subsection{Attractor estimation and controls}\label{sub:r-attractor}
Section~\ref{sub:m-ifs} framed decoding under a context as a stable attractor and derived, in
Proposition~\ref{prop:move}, that a context edit moves that attractor by an amount the detector reads. We test the
observable consequences on a balanced sample of $200$ responses scored by Qwen2.5-1.5B, extracting the answer-token
trajectory and estimating its correlation dimension by the Grassberger-Procaccia method with a Euclidean metric and a
scaling region between the 5th and 60th percentiles of pairwise distances. First, the trajectory is low-dimensional and
this is stable across depth, with correlation dimension $8.3$, $9.8$, and $8.0$ at an early, middle, and late layer in
a hidden space of dimension $1536$. That the estimator is meaningful rather than always small is confirmed by an
iid-Gaussian cloud of the same size, whose estimated dimension is $74$. The
no-context trajectory has a similar dimension of $9.3$, so low-dimensionality reflects the general geometry of decoding
and is not specific to the context attractor. Second, and more discriminating, removing the most influential chunk by
likelihood moves the trajectory more than every control, with a mean movement of $8.4$ against $3.3$ for a random
chunk, $3.7$ for a length-matched chunk, $4.2$ for the highest lexical-overlap chunk, and $4.1$ for the first chunk, so
the movement is not an artifact of chunk length, lexical overlap, or position. The paired Wilcoxon tests over the $200$
responses give $p<10^{-25}$ against each control. Figure~\ref{fig:attractor} illustrates this on one representative
response, where removing the most influential chunk pulls the answer-token trajectory away from its full-context path
while a length-matched removal leaves it almost unchanged. The movement also correlates with the likelihood feature the detector
uses, Spearman $0.71$ ($p<10^{-30}$) at the response level. The hidden-state
movement does not, however, by itself separate grounded from hallucinated sentences at this sample size, $8.55$ against
$8.19$ with $p=0.15$, so the discrimination the detector achieves comes from the likelihood and divergence readouts
rather than from the raw magnitude of the trajectory movement. Taken together, the measurements support the interpretation
that a specific context edit moves a low-dimensional decoding trajectory, and they do so under controls that rule out
the simplest confounds, but they do not prove the contraction assumption and they do not show that trajectory geometry
alone is a detector.

\begin{figure*}[t]
\centering
\includegraphics[width=\textwidth]{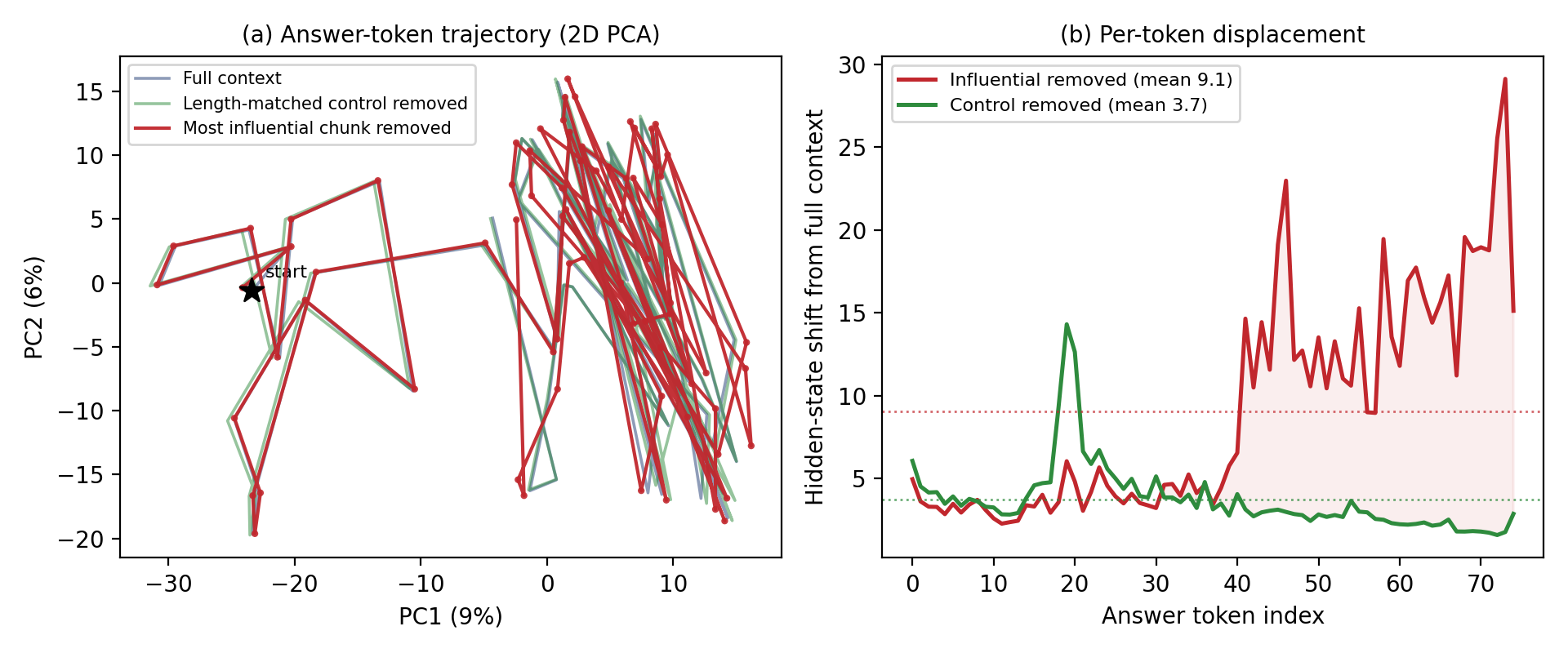}
\caption{Answer-token hidden-state trajectory at the middle layer for a representative grounded response, under the
full context, removal of the most influential chunk, and removal of a length-matched control (Qwen2.5-1.5B). (a)
Two-dimensional PCA projection. Removing the influential chunk pulls the trajectory away from the full-context path,
while the control removal stays close. (b) Per-token shift from the full-context trajectory, larger for the
influential-chunk removal, with mean movement $9.1$ against $3.7$.}
\label{fig:attractor}
\end{figure*}

\subsection{Summary of findings}\label{sub:r-summary}
The results support five main findings. Grounding sensitivity ranks hallucinated content above grounded content at both granularities
and on all three scorers, at a moderate absolute level that is clearly above perplexity, length, whole-context
entailment, and a self-consistency baseline, and competitive with a
well-configured chunk-level entailment verifier at the span level while needing no separate verifier and while also
providing a response-level score and an attribution. A training-free threshold on the standardized features matches the
trained classifier, so no labeled data is required. The signal is consistent across the three tested scorers, whose AUC
agrees within the confidence intervals, and the grounding and baseline signals are complementary at the span level
while grounding sensitivity alone carries the response-level signal. The signal transfers to a second summarization
benchmark, TofuEval, but a probe on RAGBench shows it is not universal, since it is strongest for context-constructed
outputs such as summarization and data-to-text and weak on short-answer QA, where parametric knowledge can substitute
for retrieved evidence. The hidden-state measurements are consistent with
a low-dimensional decoding trajectory that moves more under the most influential edit than under length, lexical, and
position controls, though they do not show that trajectory geometry alone separates grounded from hallucinated
sentences. Together these support the central claim that grounding is best tested relationally, by perturbing the
evidence, rather than read from intrinsic properties of the text.

\section{Discussion}\label{sec:discussion}
\subsection{Claim hierarchy}\label{sub:d-claims}
This section separates what the paper proves, what it measures, and what it conjectures. The proven part is narrow and
conditional. Under the average-contraction assumption, Proposition~\ref{prop:move} bounds the movement of the
conditional generation attractor by the JSD the detector computes, and
Propositions~\ref{prop:free} and~\ref{prop:bound} fix the limiting behavior and the scale of the features. These are
statements about an idealized RNIFS, not about a specific transformer, and we do not claim the
assumption holds exactly for the models we use. The measured part is the substance of the paper. On a span-annotated
benchmark, under a leakage-clean protocol with bootstrap confidence intervals and paired significance tests, grounding
sensitivity ranks hallucinated spans above grounded ones better than perplexity, length, and whole-context entailment,
and it does so
consistently across three scorers that span two model families. The absolute level is moderate rather than decisive.
The conjectured part is that the measured signal reflects
the attractor movement the theory describes. The correlation-dimension and trajectory-movement estimates in
Section~\ref{sub:r-attractor} support this reading but do not establish it, and a signed, direction-aware variant is a
hypothesis we leave to future work. A reader should take the empirical ranking claims as the load-bearing ones, the
theory as the reason the features take the form they do, and the dynamical-systems language as an interpretation that
the measurements make plausible rather than prove. The scope of the empirical claim is specific. GASP should be
understood as a detector of dependence on retrieved evidence, not a universal detector of factual correctness, which is
why it helps most where the answer is constructed from the context and adds little on short-answer questions the model
can answer from parametric knowledge.

\subsection{Interpretation of the mechanism}\label{sub:d-why}
Grounding sensitivity measures a relational, counterfactual property, how much a span depends on its evidence, not an
intrinsic property of the text. This is why it separates hallucinated from grounded spans where fluency-based signals
do not. It asks how the answer would have changed had a passage been absent, in the same spirit as influence
analysis, and the answer to that question is what grounding means. A surface-overlap comparison would instead conflate
topical similarity with grounding, because an unsupported sentence on the same topic shares many tokens with the context
while depending on none of them. This is also why the signal is largely insensitive to how fluently the answer is
written, since a fluent and a clumsy paraphrase of the same passage both collapse in probability when the passage is removed,
and a fluent and a clumsy unsupported sentence both fail to react, so the fluency axis along which intrinsic detectors vary is
orthogonal to the dependence axis the method measures.

\subsection{Per-type behavior}\label{sub:d-type}
We expected the baseless type to be the easier case, but evident-conflict spans are detected slightly and
consistently better. A plausible reason is that a conflicting span interacts strongly with the passage it contradicts,
so its likelihood is sensitive to that passage's removal, while some baseless additions are topically plausible and
only weakly sensitive to any single unit. This suggests a signed variant that records whether removal raises or lowers
the likelihood, since reliance lowers it and contradiction may raise it once the conflicting evidence is gone. The
current magnitude feature collapses these cases, which we leave as future work.

\subsection{Relation to faithfulness evaluation}\label{sub:d-faith}
The summarization literature has long measured faithfulness, the requirement that a summary assert only what its
source supports, and it established that faithfulness is distinct from fluency and is best judged at a fine
granularity. Grounding sensitivity operationalizes the same requirement for retrieval-augmented answers, but it differs
from the standard faithfulness metrics in three ways that matter for deployment. It is computed from a probabilistic
scorer rather than by an external judge, so there is no separate verifier to maintain or to match to the domain. It is
defined at the span level by construction, so it does not require a separate decomposition of the answer into atomic
units before scoring. And it produces, for each span, the passage that supports it, which a faithfulness score alone
does not. Where decomposition-based evaluation multiplies verifier calls across atomic facts, the present method
amortizes a fixed number of scoring passes across the whole answer and reads every sentence from the same passes,
which is what makes it cheap enough to run on every answer rather than as an offline evaluation.

\subsection{Deployment and use}\label{sub:d-use}
The detector fits an explainable guardrail that slots into a RAG pipeline after generation, since the pipeline already
holds the query, the chunks, and the answer. Given the moderate absolute performance, its right role is a triage and
review-prioritization signal, an assistive guardrail rather than a standalone safety filter that gates content without
human oversight. In that role a deployment can highlight low-sensitivity sentences for review, attach the supporting
passage to each grounded sentence as an inline citation, or, only in a lower-stakes setting, hold back sentences whose
grounding sensitivity falls below a chosen threshold. The AUC is threshold-free, but a deployment must choose an operating point
from the cost of review and from a representative class prior, since a live stream is dominated by grounded content,
which is why we report the full ROC rather than a single threshold-dependent score. More broadly, the method is detection, and it is
complementary to improving retrieval, because even with strong retrieval a generator can ignore or contradict a
supplied passage. The two address different failure points, since better retrieval reduces the chance that the model
lacks the evidence it needs, while detection catches the cases where the model fails to use the evidence it has. The
method also yields a retrieval-side benefit indirectly, because the supporting passage it returns for each grounded
sentence is exactly the attribution a citation interface would display, which can be reused to surface evidence to the
user. Because the scorer is independent of the system under audit, the guardrail can be deployed without access to the
internals of the generator that produced the answers, which matters when that generator is hosted or proprietary.

\subsection{Absolute performance}\label{sub:d-absolute}
The absolute scores are moderate, and detection from the answer and its context alone is hard, because some grounded
sentences restate widely known facts that a model would assign high probability without any passage, and some
hallucinations inherit enough topical context to behave like grounded text. This level is reached with a small scorer,
a compact feature family, and no task-specific training. The comparison that matters is relative, and grounding
sensitivity sits well above the intrinsic and whole-context baselines under the same protocol, is competitive with a
well-configured chunk-level entailment verifier, and additionally localizes and explains its decisions. The path to
higher absolute numbers, through
stronger scorers, richer aggregation, and complementary signals, is open and does not alter the central finding that
grounding must be tested by perturbing the evidence.

\subsection{Cost and relation to influence}\label{sub:d-cost}
The dominant cost is the $K+2$ scoring passes per response in Table~\ref{tab:cost}. Because the answer is fixed, these
are scoring passes rather than generation, far cheaper than decoding the answer, and they parallelize across conditions
and responses. The linear factor can be reduced by coarser chunking or by restricting the leave-one-out set to the
highest-ranked retrieved units. Conceptually the measure is close to leave-one-out influence estimation. As an alternative, reading attention as a proxy for dependence is cheaper, but attention indicates where the model looks rather than whether the
output would change if a unit were gone, so we prefer the counterfactual measure and view attention as a cost reduction
rather than the primary signal.

\subsection{Failure modes}\label{sub:d-fail}
The method has two characteristic failure modes. A baseless addition that is topically plausible may draw weak support from several
units at once, so no single removal produces a large drop and the span looks grounded, which is consistent with the
slightly lower detection of the baseless type. A grounded span whose supporting passage is split across two chunks may
have its support divided, so the maximum single-chunk drop understates its dependence. Both point to the same
refinement, a chunking that aligns with the unit of support, and to combining the maximum with a top-few aggregation.
The chunk count itself controls a trade-off, since very coarse chunks make almost every sentence react and narrow the
contrast, while very fine chunks raise the cost and risk splitting a passage, so the useful regime is one where a
chunk corresponds to a unit of evidence. In the range we tested, $K\in\{3,5,10\}$ in Table~\ref{tab:chunk}, detection
is insensitive to the choice, with coarser chunks marginally ahead, so within a reasonable range the trade-off is
mild.

\subsection{Threat model}\label{sub:d-threat}
In the intended setting the threat is an honest generator that occasionally produces ungrounded content, not an
adversary that crafts answers to evade the detector. Under that assumption the signal is well matched, since ordinary
hallucinations are produced from the prior and are insensitive to context by construction. An adaptive adversary could
in principle phrase an unsupported answer so that it borrows just enough from a passage to react under removal, which would
raise its grounding sensitivity and evade the detector. Defending against such adaptive evasion is out of scope here,
and it would require the usual robustness treatment of adaptive attacks, which we note as a direction rather than a
claim. Evasion is not free for the adversary, however, since making an unsupported answer react to a passage means tying it to that passage content, which pulls the statement toward being grounded in it, so the space of unsupported answers that both evade the detector and remain ungrounded is narrower than the space of unsupported answers at large. The detector therefore raises the cost of producing fluent, unsupported content that also looks grounded under
perturbation, even if it does not eliminate it, and combining it with complementary signals would further constrain
the adversary.

\section{Threats to Validity}\label{sec:threats}
The construct validity of the labels depends on the human annotations of the benchmark, which we take as ground truth,
and annotation noise places a ceiling on measurable performance. The internal validity of the comparison depends on
the leakage-clean protocol, which is easy to do incorrectly, since pooling all spans and running an ordinary
cross-validation lets a classifier train on some sentences of an answer and test on others, and the resulting
inflation is not small. Grouping folds by answer removes this path. The external validity rests on two benchmarks in two domains, RAGTruth and
TofuEval, which is stronger than one but still a limited sample. The absolute numbers may therefore differ elsewhere, and
short-answer question answering would need a different segmentation. The two task types were chosen because their answers are long
enough for stable per-sentence estimation, and the method is expected to be weaker where answers are very short, since
a sentence of a few tokens gives a noisy estimate of dependence. The three scorers span two model families, so the
robustness we observe is across scale and across family, not within a single lineage. Three models are still a small sample, however. Beyond the scorers, benchmark coverage is improved but the method has a scope
boundary that the coverage makes explicit. Grounding sensitivity shows clear, significant gains on two
summarization-faithfulness benchmarks, RAGTruth and the MeetingBank portion of TofuEval in
Section~\ref{sub:r-tofueval}, which span two domains and two annotation schemes, but on the multi-domain short-answer
QA of RAGBench in Section~\ref{sub:r-ragbench} it is only comparable to perplexity, which bounds the empirical claims to the summarization setting. In addition, all three benchmarks are English, which bounds the external validity of these
results to English. The class-balanced
sampling departs from the natural prior, which suits a threshold-free ranking metric but means the numbers are not operating-point estimates. For the baselines, the entailment comparison uses both a compact and a large verifier and a
self-consistency baseline, so it is not resting on one weak verifier. Conclusion validity is supported by the
consistency of the effect across three independent scorers and across the response and span levels, together with
bootstrap confidence intervals and paired significance tests that place the grounding gap over perplexity outside the
interval on every scorer. This makes it unlikely that the result is an artifact of a single configuration, although a
larger study with multiple benchmarks would strengthen the statistical claims further. We also note that the
class-balanced construction is a deliberate choice for measuring ranking quality and is not a claim about prevalence,
so the numbers should not be read as the rate at which a live system hallucinates.

\section{Ethical Considerations}\label{sec:ethics}
A hallucination detector is a safety tool, and its limitations carry ethical weight. The detector is imperfect, so it
must support human review rather than replace it, and an automated filter that silently drops flagged content could
suppress correct statements that score low. Over-reliance is a second risk, since a detector that is usually right can
train its users to stop checking, so a sentence that slips through is trusted more, not less. Separately, the method does not generate content and adds no data collection beyond the public benchmark, and the supporting-passage explanation is
intended to keep a person in the loop. We recommend deploying it as an assistive guardrail that surfaces and explains,
with final judgment left to a person in high-stakes settings, and communicating its confidence and known failure
modes alongside its flags. Transparency about scope is part of responsible use. The detector judges grounding in the
retrieved evidence, not truth in the world, so a statement that is faithfully grounded in a passage that is itself
wrong will not be flagged, and users should understand that the tool checks consistency with the provided sources
rather than correctness. Presenting the supporting passage alongside each decision helps here, because it lets a
reviewer judge both whether the sentence is grounded and whether the source it relies on is trustworthy, which is a
judgment the detector does not make on its own.

\section{Conclusion}\label{sec:conclusion}
We presented GASP, a span-level detector for hallucination in RAG that scores each answer sentence by its grounding
sensitivity, how strongly its likelihood depends on the retrieved evidence. We framed decoding under a context as a
context-conditioned RNIFS whose invariant measure is the attractor of grounded continuations, and we measured the
sensitivity of that attractor to context perturbation using log-likelihood drops and JSD under leave-one-context-out
removal. On two span-annotated benchmarks in two domains, RAGTruth and the MeetingBank portion of TofuEval, with three
instruction-tuned scorers spanning two model families and a leakage-clean evaluation, grounding sensitivity outperformed
perplexity, with a significant gap under a paired bootstrap test on all three scorers, and outranked length,
whole-context entailment, and a self-consistency baseline, while remaining competitive with a well-configured
chunk-level entailment verifier at the span level. It was consistent across the three tested scorers and produced a
candidate supporting passage for each flag at no extra cost. A probe on the multi-domain short-answer QA of RAGBench
marked the boundary of the approach, since there grounding sensitivity was only comparable to perplexity, which places
the method in settings where the answer must be constructed from the retrieved context rather than recalled from
parametric knowledge. The approach reframes
hallucination as a failure of dependence on evidence rather than a property of the text, and it offers an explainable,
inexpensive guardrail for retrieval-augmented systems. More broadly, it argues for measuring grounding through
controlled perturbation of the evidence, a principle that extends beyond the specific features used here. The
qualitative finding, that the reaction to evidence removal separates grounded from ungrounded content where intrinsic
fluency does not, follows from what grounding means rather than from the particular instantiation we evaluated.

\section{Limitations and Future Work}\label{sec:limits}
The absolute AUC is moderate, in the range from $0.64$ to $0.75$, which is expected for hard hallucination detection
with small models and a compact feature family, and which marks the method as a signal that beats standard baselines
rather than a finished product. Larger and stronger instruction-following scorers may raise the ceiling, and the same
perturbation machinery applies unchanged. Additional RAG domains and benchmarks would test external validity, and the
method is language-agnostic in principle, so an evaluation in other languages, including low-resource settings where
annotated data is scarce, would test whether grounding sensitivity transfers. The empirical comparison covers the
baseline families that can be run under a single leakage-clean protocol, and it does not include a head-to-head with
the white-box mechanistic detectors or with a supervised RAGTruth-trained judge, which occupy different operating
regimes. A unified benchmark that evaluates black-box and white-box detectors under one protocol would make these
comparisons direct and is left to future work. The per-type finding motivates separating
the direction of sensitivity from its magnitude. Multi-passage support, where a claim rests on the combination of two
or more passages that the single-chunk maximum cannot capture, motivates a small-subset removal or a top-few
aggregation. Cheaper approximations of the perturbation, for example attention-based or gradient-based proxies, would
reduce the linear cost in the number of chunks. An adaptive chunking that follows retriever passage boundaries would
align the unit of removal with the unit of evidence and is likely to sharpen both detection and localization. Finally, the quantitative link between the measured
divergence and the movement of the invariant measure, which Proposition~\ref{prop:move} establishes under an
average-contraction assumption and Section~\ref{sub:r-attractor} tests on the hidden states, could be tightened into a
model-specific bound by estimating the contraction and spread constants of a given network directly, which would
replace the qualitative account with a numerical guarantee.

\section*{Declarations}
\vspace{0.5em}
\begin{itemize}
  \item \textbf{Funding:} Not applicable.
  \item \textbf{Conflict of Interest:} The author declares no competing interests.
  \item \textbf{Availability of Data and Materials:} All three benchmarks are publicly available (RAGTruth, the MeetingBank portion of TofuEval, and RAGBench). The full implementation and evaluation pipeline, including the scoring, feature extraction, leakage-clean evaluation, and the scripts that generate every table and figure, is available as open-source code at \url{https://github.com/drbouke/GASP}.
  \item \textbf{Ethics Approval:} Not applicable.
\end{itemize}

\section*{Author Biography}
\noindent
\begin{minipage}[t]{2.6cm}
\vspace{0pt}
\includegraphics[width=2.5cm]{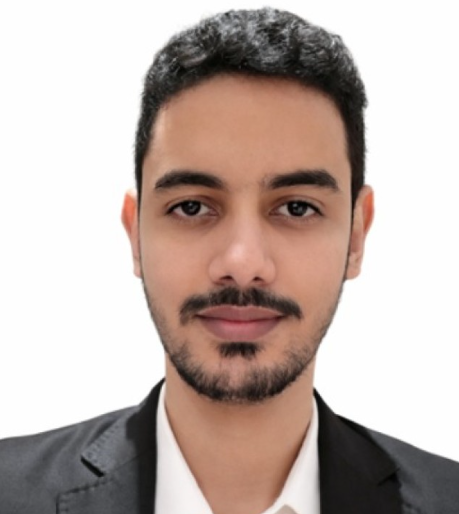}
\end{minipage}\hfill
\begin{minipage}[t]{\dimexpr\columnwidth-2.9cm\relax}
\vspace{0pt}
\textbf{Mohamed Aly Bouke} is a Postdoctoral Researcher at Multimedia University, Malaysia, and a Senior Member of IEEE.
He received his PhD in Information Security from Universiti Putra Malaysia. He serves as an Associate Editor for
\emph{e-Prime -- Nexus of Electrical, Electronic, and Intelligent Engineering}. His research interests include
cybersecurity, artificial intelligence, cryptography, and mathematical modeling.
\end{minipage}

\printbibliography

\end{document}